\DeclareRobustCommand{\uvec}[1]{{%
		\ifcsname uvec#1\endcsname
		\csname uvec#1\endcsname
		\else
		\bm{\mathbf{#1}}%
		\fi
}}
\newtheorem{prop}{Proposition}
\newtheorem{rem}{Remark}
\title{\LARGE \bf
A Novel Assistive Controller Based on Differential Geometry for Users of the Differential-Drive Wheeled Mobile Robots 
}
\author{Seyed Amir Tafrishi, Ankit A. Ravankar, Jose Victorio Salazar Luces
	 and Yasuhisa Hirata 
	\thanks{Authors are with Department of Robotics, Tohoku University, 6-
		6-01 Aramaki-Aoba, Aoba-ku, Sendai 980-8579, Japan. 
		{\tt\small \{s.a.tafrishi, ankit, j.salazar, hirata\}@srd.mech.tohoku.ac.jp}	}%
}
 \newcommand{\makehighlight}[1]{\textcolor{black}{#1}}
\begin{document}

\maketitle
\thispagestyle{plain}
\pagestyle{plain} 

\begin{abstract}
\makehighlight{Certain wheeled mobile robots e.g., electric wheelchairs, can operate through indirect joystick controls from users.} Correct steering angle becomes essential when the user should determine the vehicle direction and velocity, in particular for differential wheeled vehicles since the vehicle velocity and direction are controlled with only two actuating wheels. This problem gets more challenging when complex curves should be realized by the user. A novel assistive controller with safety constraints is needed to address these problems. Also, the classic control methods mostly require the desired states beforehand which completely contradicts human's spontaneous decisions on the desired location to go. \makehighlight{In this work, we develop a novel assistive control strategy based on differential geometry relying on only joystick inputs and vehicle states where the controller does not require any desired states.} We begin with explaining the vehicle kinematics and our designed Darboux frame kinematics on a contact point of a virtual wheel and plane. Next, the geometric controller using the Darboux frame kinematics is designed for having smooth trajectories under certain safety constraints. We experiment our approach with different participants and evaluate its performance in various routes.
\end{abstract}

\section{Introduction}
Recently, different vehicles are increasingly getting involved in human life and activities. The wheeled mobile robot e.g., wheelchairs are the conventional vehicles utilized more often. \makehighlight{ There have been researches to control these mobile robots considering their desired states.} However, the vehicle users are not always fully aware of the exact determined goals (desired configuration/states) or they change their goals continuously. \makehighlight{This urges a controller that assists the user input without any external sensors or a priori information about the desired states}. For example, the assistive controller can be greatly beneficial in wheelchair patients who do not have full ability to control certain parameters of the moving mobile robot (varying the vehicle velocity). These issues are barely covered in the literature.  

\makehighlight{We can roughly divide the motion control problem for the mobile robot into path planning and trajectory tracking \cite{261508,badreddin1993fuzzy,laumond1994motion,Astolfi1999} with some about sensor-based planning algorithms \cite{gasparetto2015path}}. In the path planning problem, the vehicle's initial and desired configuration consisting of the position and direction of the vehicle \makehighlight{are often known} \cite{zheng1993recent,laumond1998robot}. The goal is to establish a suitable trajectory for the vehicle under certain constraints namely known obstacles. For instance, a nilpotent form of the model was designed to create feedback transformation for different systems including car systems \cite{261508}. Laumond et al. developed various geometric control strategies for collision-free path planning of wheeled mobile robots \cite{laumond1994motion,laumond1998robot,giordano2006nonholonomic}. Certain planning techniques were proposed for more challenging cases \cite{hsu2002randomized,lavalle2006planning,minguez2016motion}, namely, path planning through moving obstacles. 
\makehighlight{Recent studies took a direction in making algorithmic motion planning by combining different sensors e.g., LiDAR and depth camera through navigating crowded environments \cite{gasparetto2015path,paden2016survey}. The ultimate goal is to achieve autonomous self-driving cars \cite{paden2016survey,badue2021self} that are covered in recent studies. }
 
  \begin{figure} 
 	\centering
 	\vspace{3mm} 
 	\includegraphics[width=3.2 in, height=2 in]{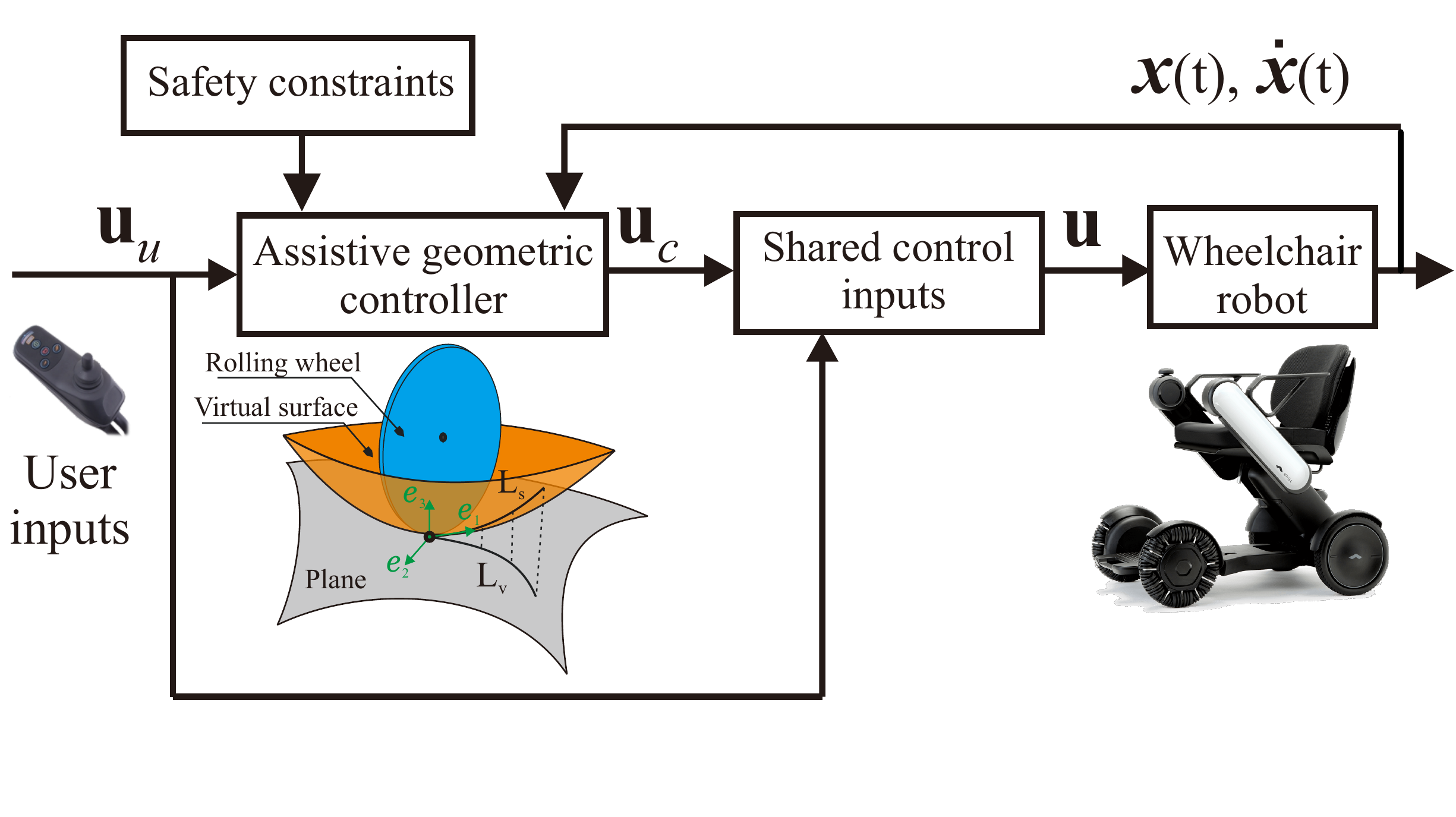}
 	\caption{The framework of the proposed assistive controller. }
 	\label{Fig:blockdiagram}
 \end{figure}
\makehighlight{The trajectory tracking problem aims for tracking the already given path or desired states, velocity in particular, on the current time \cite{samson1995control,laumond1998robot,morin2008motion,tzafestas2018mobile}. In other words, the objective is to keep the vehicle on the desired trajectory or reference states.} This control problem focuses on minimizing the external disturbances and it often uses robot dynamics to minimize the convergence errors. For example, a combined feedforward and feedback control strategy was created for a mobile robot under disturbances \cite{klancar2005mobile}. Currently, there have been certain attempts to do reference tracking by linear control methods \cite{sousa2016trajectory,forte2018reference} or fuzzy logic control \cite{abdelwahab2020trajectory}. \makehighlight{In the control perspective, there have been attempts to develop strategies that make the user travel with the vehicle safely by combining different sensors.  The reactive control strategy with the inclusion of LiDAR sensor data is one of the conventional ways \cite{prassler2001robotics,gasparetto2015path}.
Based on the covered studies, there is no controller that does not require a priori reference trajectory or desired configuration without using many sensor e.g., LiDAR information. This means, an assistive controller can be highly important to correct the user inputs for safer and better locomotion of the vehicle by using only joystick inputs. Also, this control strategy cannot easily be developed with scaling the inputs since the trajectories can get complex with following continuously changing user inputs and the problem becomes harder.}  

\makehighlight{There have been some attempts in developing shared autonomy between user inputs and manipulator robots \cite{dragan2013policy,javdani2015shared,javdani2018shared} outside of the mobile robot research field.}

S. Javdani et al. used a partially observable Markov decision process (POMDP) to predict the intended object to pick while the autonomy system does not know the goal (without desired configuration) as a priori \cite{javdani2018shared}. The work shed light on control strategies with no desired states but still had challenges since the manipulator mainly ran with an internal controller and did not consider any complex trajectory utilization that the user intended to do with a joystick. This important issue is predominant in mobile robots because user can follow complex and highly dynamic paths with spontaneous decisions on the desired goal.


\makehighlight{Various control strategies have been considered for wheelchair systems. Park and Kuipers developed a feedback controller in order to drive an electric-powered differential-drive wheelchair \cite{park2011smooth}. This controller was correcting the vehicle velocity depending on the path's curvilinear curvatures. There have been studies in smoothing the traversed path by mobile robots \cite{ravankar2018path}. Also, a velocity control to follow a reference trajectory was designed considering jerk limitation and longitudinal acceleration \cite{seki2005velocity}. However, complex curves that have sharp turns/features were not considered in these studies for increasing the level of maneuvers. Next, a predictive controller with a linearized model was proposed for trajectory tracking of an electric wheelchair \cite{nguyen2018path}. Previously proposed control approaches for a wheeled mobile robot are not well suited for wheelchair users since the user/patient cannot determine and communicate with an advanced controller to feed desired states (this can be velocity or position) continuously. This problem multiplies with using discrete joystick controls (easy to jump different values continuously) in wheelchairs rather than an advanced mechanical shaft to the steering wheel that cars normally encompass.}

In our previous study, we developed a simple geometric controller for a mobile robot with steering wheel and evaluated the motion in simulation \cite{TafrishiRObio2021}. In this work, we introduce a new assistive controller for differential-drive mobile robot [see Fig. \ref{Fig:blockdiagram} for the framework], with motivation from existing challenges, that does not use any priori desired configuration/states. The established new geometric controller $\uvec{u}_c$ relies on the curvature variation of the traversing trajectory using the Darboux frame kinematics of a virtual wheel \cite{CuiDarboux2010,TafrishiMAMT2021}. \makehighlight{Also, we design this controller with arc-length-based time-invariant functions that rely only on user joystick inputs $\uvec{u}_u$ and feed on current states of the vehicle $\{\uvec{x}(t),\dot{\uvec{x}}(t)\}$ with using inertial measurement unit (IMU) and shaft encoders, under defined safety constraints.}

 In this paper, we begin by describing the vehicle and Darboux frame kinematics in Section II. Next, based on our defined control problem in Section III, we derive our geometric controller on the Darboux frame kinematics in Section IV. Finally, we experiment and check the controller behavior and performance with different participants by using a differential-drive wheelchair in Section V. \makehighlight{To the best of our knowledge, this is the first assistive controller approach that does not require any priori desired states and relies on user joystick input and vehicle current states only without utilizing external sensors such as LiDAR or camera. The new controller tries to develop smooth trajectories while it is decreasing the user effort. Thus, it can be extensively suitable for mobile robot applications.}

\section{Kinematics of the Vehicle and Virtual Wheel on a Darboux Frame}
In this section, we demonstrate the kinematic model for differential wheel mobile robots. Next, a Darboux frame with a virtual wheel is introduced at the vehicle center since it will be used for designing an assistive control with geometric functions in the arc-length domain.
 
  \begin{figure} 
	\centering
	\vspace{3mm} 
	\includegraphics[scale=.54]{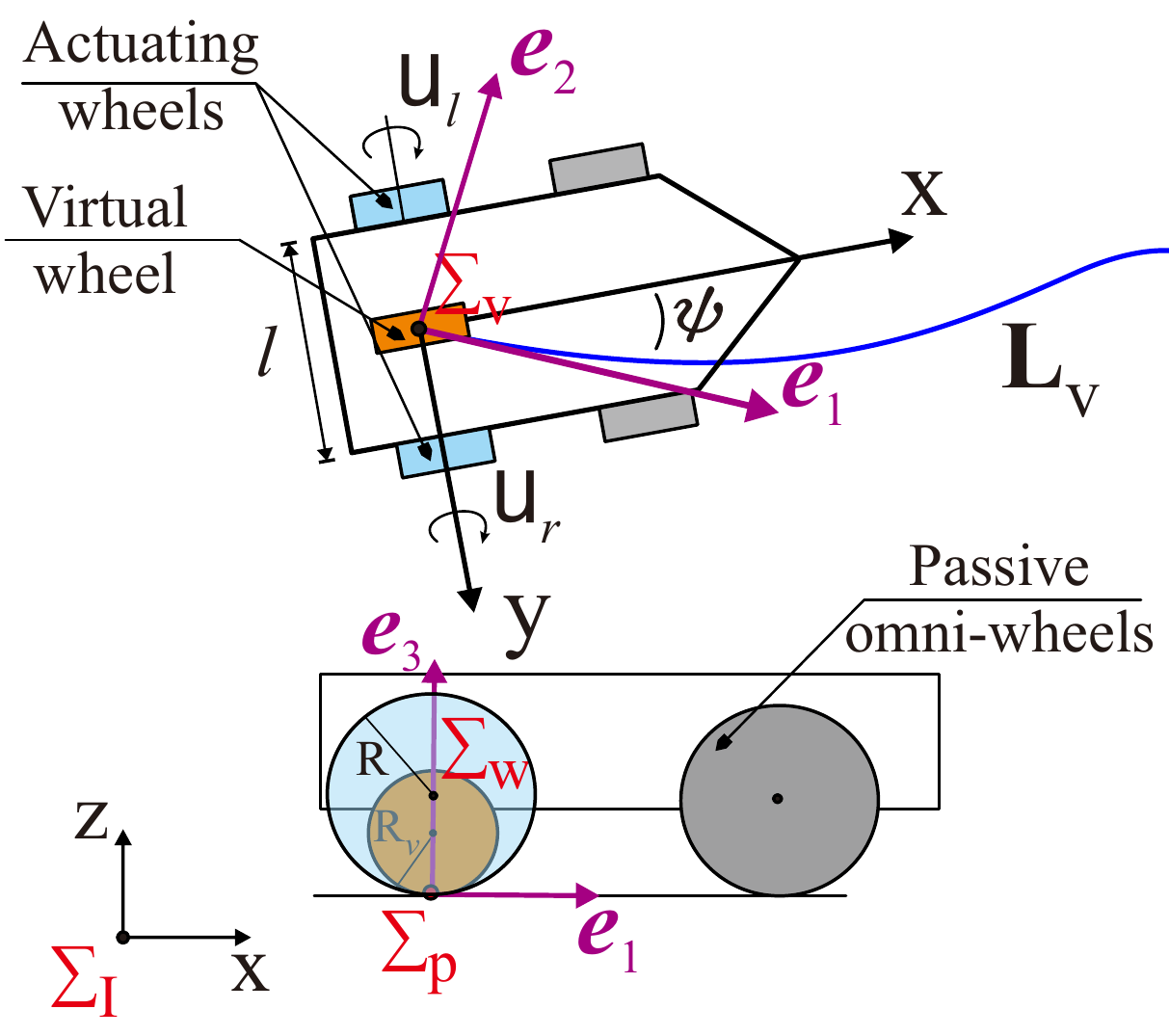}
	\caption{\makehighlight{Schematic of the differential wheeled mobile robot with virtual wheel frame at the center of the vehicle.}}
	\label{Fig:car_schematics}
\end{figure}
Fig. \ref{Fig:car_schematics} illustrates the frames of the differential wheeled mobile robot. We have simplified the model for generality to a moving vehicle where the robot is actuated with two wheels in the back and passive omnidirectional wheels at the front without steering. The frame $\Sigma_V$ is fixed to the vehicle center where it is aligned with the actuating back wheels center with a frame of $\Sigma_W$. Additionally, $\Sigma_I$ is the inertia frame. The vehicle axis $\Sigma_V$ has a spin angle of $\psi$ with respect to the inertia frame. In addition to classic frames, we introduce a new frame on the vehicle $\Sigma_V$ located at the contact point of wheels with the plane as a virtual wheel frame $\Sigma_p$. Next, we assume that $\uvec{L}_v$ \makehighlight{with $C^{\infty}$ smoothness in continuous-time domain} is a path that the wheeled mobile robot follows on the plane. Also, the back driving wheels with a radius of $R$ are rotating. Note that there is a no-sliding constraint between the wheel and the ground.

\makehighlight{Based on the frame definitions, the kinematics of the moving differential wheeled vehicle $\Sigma_V$ in time $t>0$} on the plane can be presented as follows
\begin{eqnarray}
\dot{\uvec{x}}(t)=\left[\begin{array}{c}
\dot{x}(t)\\
\dot{y}(t)\\
\dot{\psi}(t)
\end{array}\right]=\left[\begin{array}{c}
\cos\psi\\
\sin\psi\\
0
\end{array}\right]u_v+\left[\begin{array}{c}
0\\
0\\
1
\end{array}\right]u_\omega ,
	 \label{Eq:Kinematicofvehicle}
\end{eqnarray}
where $(x,y)$ and $\psi$ are the plane states and the rotation angle between the vehicle and the plane and also the control inputs $\uvec{u}=[u_v,u_{\omega}]^T$ in system (\ref{Eq:Kinematicofvehicle}) are the vehicle linear velocity $u_v$ and angular velocity $u_\omega$ inputs which is given as
\begin{eqnarray}
\left[\begin{array}{c}
u_v\\
u_\omega
\end{array}\right]=\left[\begin{array}{cc}
1/2& 1/2 \\
1/l & -1/l
\end{array}\right]\left[\begin{array}{c}
u_r\\
u_l
\end{array}\right],
\end{eqnarray}
where $u_r$ and $u_l$ are the angular velocity inputs of the left and right wheels.

Because we want to develop a geometric controller based on the traversing path's curvature, we use the Darboux frame definition \cite{CuiDarboux2010,DIfgeometry1976,TafrishiMAMT2021}. In our early study \cite{TafrishiMAMT2021}, we illustrated that how the Darboux frame can be used in parameterizing the motion kinematics with arc-length-based inputs for a spin-rolling sphere on the plane. \makehighlight{The simpler case of this system as the rolling disc (wheel in our case) was also shortly discussed \cite{CuiDarboux2010,TafrishiMAMT2021}. The Darboux-frame kinematics brings different advantages such as producing velocities based on the projected curvature of the interacting surfaces. This is not possible with Frenet frame \cite{samson1995control} kinematics which studies the moving particle on curves. In our case, the interacting surfaces are a rolling object (virtual wheel here), the plane surface, and sandwiched virtual surface on the Darboux frame. The virtual surface is then designed as arc-length-based inputs. In addition, the Darboux-frame-based kinematics is time- and coordinate-invariant and can be parametrized with extra control inputs that help to increase accessibility \cite{TafrishiMAMT2021} which we utilize in this study.}

Darboux frame kinematics is derived on $\Sigma_p$ frame. It is assumed that the Darboux frame $\Sigma_p$ is at the center of the vehicle where a virtual wheel with radius $R_v$ rotates at $\uvec{P}$. Every contacted point $\uvec{P}$ on $\Sigma_p$ has a unit-based Darboux frame ($\bm{e}_1,\bm{e}_2,\bm{e}_3$) \cite{Riemannian2002} which follows the path $\uvec{L}_v$ where $\bm{e}_1$ is a tangent vector to the path $\uvec{L}_{v}$, $\bm{e}_3$ is a normal vector to the plane, and $\bm{e}_2$ is perpendicular to the plane, $\bm{e}_3 \times \bm{e}_1$. Then, the general angular velocity of the wheel (rolling disc) $\bm{\omega}_p$ on the contact point of the Darboux frame is obtained \cite{CuiDarboux2010,TafrishiMAMT2021}
  \begin{align} 
  \bm{\omega}_p= \delta  (-\tau_g \bm{e}_1+k_n\bm{e}_2-k_g\bm{e}_3),
  \label{Eq:angularveloctytotal}
  \end{align}
where 
\begin{align}
&\delta=ds/dt,\;k_g= k^{w}_g-k^{s}_g-\alpha_s,\;k_n=k^{w}_n-k^{s}_n-\gamma_{s},\;\nonumber\\ &\tau_g=\tau_g^{w}-\tau^{s}_g,
\label{EQ:TheCurvaturerelevantDif}
\end{align}
where $k^w_g=0$, $\tau^w_g=0$ and $k^w_n=1/R_v$ are virtual wheel's geodesic curvature, geodesic torsion and normal curvature, and the fixed surface is considered as a plane with $k^s_g=\tau^s_g=k^s_n=0$ curvatures. Note that here radius of virtual wheel $R_v$ is equal to our considered wheels radius $R$. Also, $\{\alpha_s,\gamma_s\}$ are arc-length-based inputs as a sandwiched virtual surface \cite{TafrishiMAMT2021} and $\delta$ is defined as the rolling rate input in the time domain. \makehighlight{ We will utilize the extended control inputs $\{\alpha_s,\gamma_s,\delta\}$ in developing our controller. In a previous study, we determined how an underactuated ball-plate system is transformed to a fully-actuated model with extra arbitrary inputs \cite{TafrishiMAMT2021}. We take similar and simplified computation to derive the rolling virtual wheel to transform the system (\ref{Eq:Kinematicofvehicle}) to 3$\times$3. The extra inputs help us to define extra functions in constraining the traveled path and vehicle velocity dependent on our safety constraints (derived functions).} By substituting the curvatures into Eqs. (\ref{Eq:angularveloctytotal})-(\ref{EQ:TheCurvaturerelevantDif}), we have the angular $\bm{\omega}_p$ and linear $\uvec{v}_p$ velocities on $\uvec{P}$ as follows 
\begin{eqnarray}
\bm{\omega}_p&=& \delta \left[ \left[(1/R_v)+\gamma_s\right]\bm{e}_2 +\alpha_s \bm{e}_3 \right],\; \nonumber\\
 \uvec{v}_p&=& \bm{\omega}_p\times \uvec{r}_w= \delta \left[ \left[(1/R_v)+\gamma_s\right]\bm{e}_2 +\alpha_s \bm{e}_3 \right] \times R_v\;\bm{e}_3 \nonumber \\
&=&\delta \left[1+ R_v \; \gamma_s  \right]\bm{e}_1.
\label{Eq:Rollingdiscunitbased}
\end{eqnarray}
We can see that the curvature variation is projected onto unit frames which varies the linear and angular velocities. The virtual wheel travels on the plane along $\bm{e}_1$ with corresponding kinematics (\ref{Eq:Rollingdiscunitbased}). Additionally, the kinematics can be represented in the classic velocity formulation on the Cartesian coordinate $\{\uvec{p}-\bm{i}\bm{j}\bm{k}\}$ system by following expressions
\begin{align}
&\bm{e}_1=\cos \psi \bm{i}+\sin \psi \bm{j},\;\;\bm{e}_2=-\sin \psi \bm{i}+\cos\psi \bm{j},\; \bm{e}_3=\bm{k}, \nonumber \\
&\delta =\frac{ds}{dt}= R_v  \frac{d \theta}{dt}.
\label{Eq:examplecasecartesianveldar}
\end{align}
where $\theta$ represents angular rotation of the virtual wheel.
    \begin{figure} 
	\centering
	\vspace{3mm} 
	\includegraphics[scale=.70]{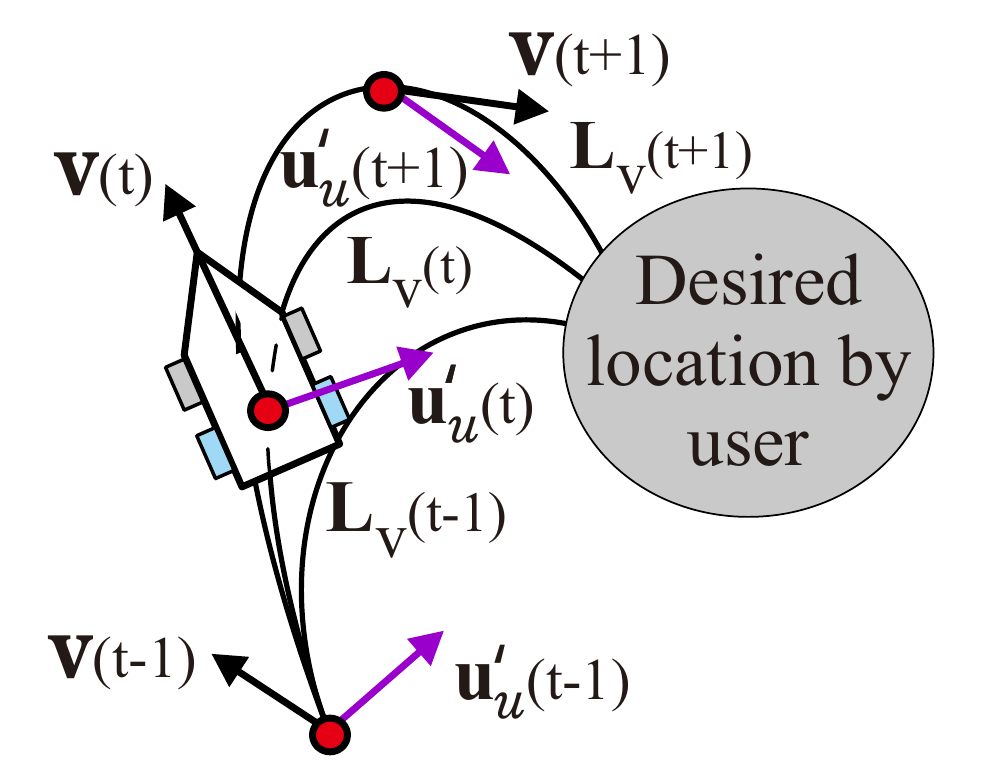}
	\caption{\makehighlight{The problem statement of the motion assisted control. The vehicle with the velocity $\uvec{v}(t)$ always creates safe and smooth trajectories in continues-time $t$ on $\uvec{L}_v$ while the user input $\uvec{u}'_u$ aims for the desired location.}}
	\label{Fig:Motion_Action_Trajecty}
\end{figure}
\section{Problem Statement}
\label{Sec:ProblemStatement}
We explain the control problem that we have established for the wheeled mobile robot. Also, the safety conditions as constraints are described.

 \makehighlight{Fig. \ref{Fig:Motion_Action_Trajecty} presents an example statement that we have chosen. The assistive controller aims to correct the moving vehicle velocities $\uvec{v}(t)=[v_v,\;\dot{\psi}]^T$ under safety constraints without knowing any priori goal where $v_v$ is the linear velocity of the vehicle. Also this controller allows the user to have more freedom in assigning travel velocity $u'_{v,u}$ and angular direction $u'_{\psi,u}$ by the joystick inputs.} Next, the inputs $\uvec{u}=[u_v,u_{\omega}]^T$ of the kinematic model (\ref{Eq:Kinematicofvehicle}) are defined with the following shared control formulation
\begin{equation}
\uvec{u}(t)=(1/n)\left[\uvec{u}_u(t)+(n-1)\uvec{u}_c\right],
\label{Eq:theControllerMainDefi}
\end{equation} 
where $\uvec{u}_u=[u_{v,u},u_{\omega,u}]^T$, $\uvec{u}_c=[u_{v,c},u_{\omega,c}]^T$ and $n \in N$ are the raw inputs by the user, the inputs of the controller, and the level of reliance on the controller. \makehighlight{The higher shared control variable $n\in [1,\infty]$ becomes the less the user inputs $\uvec{u}_u$ will be effective in directing the vehicle, and vehicle will be assisted by only the controller inputs $\uvec{u}_c$. }

In our designed controller, we do not have a desired state/configuration. As shown in Fig. \ref{Fig:blockdiagram}, only  the user's raw inputs $\uvec{u}'_u=(u'_{v,u},u'_{\psi,u})$ and vehicle current states $\{\uvec{x},\dot{\uvec{x}}\}$ are used for the controller functions. \makehighlight{Also, the vehicle states, velocity $\uvec{v}$ and current orientation of the vehicle $\psi(t)$ are obtained by measurements from inertial measurement unit and motor shaft encoders.} Please note that in the case of mobile robot without controller $\uvec{u}_u=\uvec{c}\uvec{u}'_u$, where $\uvec{c}$ is a linear ratio to transform user joystick inputs to velocity. However, in our case, our nonlinear geometric controller utilizes the raw inputs of user $\uvec{u}'_u$ to develop a proper strategy in updating $\uvec{u}_c$.

We assume, only the user knows the temporary desired configuration $\uvec{x}_f=\{x_f(t),y_f(t),\psi_f(t)\}$ while vehicle moves along arbitrary $\uvec{L}_v$ path (path is not known/given). The controller velocity input and steering angle $\uvec{u}_c$ are updated from an intended curvature of trajectory $\uvec{L}_v$ and introduced objectives based on the user/driver inputs $\uvec{u}'_u=[u'_{v,u},u'_{\psi,u}]^T$. Next, the following safety and control objectives are established:

\makehighlight{(i)} Only the user knows the approximate desired configuration $\uvec{x}_f$ and our shared controller $\uvec{u}$ is assisted by including the user inputs $\uvec{u}_u$ without any priori knowledge of $\uvec{x}_f$.

\makehighlight{(ii)} The controller corrects the trajectory for having smooth curvature on path $\uvec{L}_v$ while the motion safety is satisfied under geometric motion constraints on input $\uvec{u}$. The assistive controller is based on the defined geometric constraints through the virtual wheel model (\ref{Eq:Rollingdiscunitbased}) on the Darboux frame that uses the user inputs $\uvec{u}_u$.

\makehighlight{(iii)} The first safety constraint is on the input velocity $u_{v,u}$ that changes under \makehighlight{$|(1/n)[u_{v,u}+(n-1)u_{v,c}(\uvec{u}'_{u},\uvec{v})]| \leq | v_{v} |$} condition where $u_v(t)$ is the corrected mobile robot velocity that uses the steering angle of the user $u'_{\psi,u}$ with the respect to the changes of smooth path curvature of $\uvec{L}_v$.

\makehighlight{(iv)} The sensitivity of the vehicle orientation $\psi$ is considered as the second safety constraint. The input $u_{\omega,u}$ has a safety constraint that varies based on the vehicle velocity $\uvec{v}(t)$ and the difference between the current intended steering angle of the user steering input $u'_{\psi,u}$ and vehicle orientation $\psi(t)$ as \makehighlight{$|\int_{0}^{t} \left[ (1/n)[u_{\omega,u}+(n-1)u_{\omega,c}(\uvec{u}'_u,\uvec{v},\psi)]\right] dt|\leq|\psi|$}.

\section{Assistive Geometric Controller}
In this section, we design our assistive geometric controller by utilizing differential geometry. The geometric functions of the controller in the arc-length domain are designed under the explained objectives in our problem statement.

We want to construct our geometric controller based on the introduced Darboux frame kinematics of the virtual wheel. \makehighlight{We want to develop a relationship between the system inputs (\ref{Eq:Kinematicofvehicle}) and our proposed Darboux-frame-based kinematics \cite{TafrishiMAMT2021}. The idea is first to transform an underactuated (3x2) system (\ref{Eq:Kinematicofvehicle}) to a fully-actuated one using the introduced Darboux-frame kinematic (\ref{Eq:Rollingdiscunitbased}) on $\Sigma_p$ where it is time- and coordinate-invariant. This helps the system to be more accessible as indicated in our previous work \cite{TafrishiMAMT2021} where a generic ball-plate system was considered. Then, The induced geometric surface as the virtual surface $\{\alpha_s,\gamma_s\}$ on $\Sigma_p$ can be designed as a controller for system (\ref{Eq:Kinematicofvehicle}) in an arc-length domain where rolling rate $\delta$ variable separates time domain from the controller.}
\begin{prop}
	\label{PropostionDarbouxControlinput}
Let's assume there is a no-sliding constraint. The Darboux-frame-based kinematics (\ref{Eq:angularveloctytotal}) with the relationship between rolling virtual wheel, the plane and virtual surface is considered in separated arc-length domain \cite{TafrishiMAMT2021}. \makehighlight{Then, the geometric controller as the sandwiched virtual surface on $\Sigma_p$ with aligned $\{\bm{e}_1,\bm{e}_3\}$ unit vectors with vehicle frame $\Sigma_V$ is defined as follows }
	\begin{equation}
	\label{Eq:Geoemtricocontrolvector}
	\uvec{u}_c= \left[\begin{array}{c}
	u_{v,c}\\
	u_{\omega,c}
	\end{array}\right] = \left[\begin{array}{c}
	\delta \left(1+ R_v \; \gamma_s  \right)\\
	\delta \alpha_s
	\end{array}\right],
	\end{equation}
	where $(\alpha_s,\gamma_s)$ are the virtual surface inputs and $\delta$ is the rolling rate input. 
	\end{prop}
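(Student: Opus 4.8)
\noindent\textit{Proof plan.} The plan is to realize the vehicle kinematics (\ref{Eq:Kinematicofvehicle}) as the Cartesian image of the Darboux-frame kinematics (\ref{Eq:angularveloctytotal})--(\ref{Eq:Rollingdiscunitbased}) of the virtual wheel located at $\Sigma_p$, and then to read off $\uvec{u}_c$ by matching the two descriptions term by term. Concretely, I would fix the Darboux triad at the contact point $\uvec{P}$ so that $\bm{e}_1$ points along the vehicle heading and $\bm{e}_3$ is the plane normal; this is exactly the alignment recorded in (\ref{Eq:examplecasecartesianveldar}), namely $\bm{e}_1=\cos\psi\,\bm{i}+\sin\psi\,\bm{j}$, $\bm{e}_2=-\sin\psi\,\bm{i}+\cos\psi\,\bm{j}$, $\bm{e}_3=\bm{k}$. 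With this identification the Darboux frame and the body frame $\Sigma_V$ coincide, so the velocity of $\uvec{P}$ equals the velocity of the vehicle center and the $\bm{e}_3$-component of $\bm{\omega}_p$ is the yaw rate $\dot{\psi}$.

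The concrete steps are: (1) insert the virtual-wheel curvatures $k^w_g=\tau^w_g=0$, $k^w_n=1/R_v$ and the plane curvatures $k^s_g=\tau^s_g=k^s_n=0$ into (\ref{EQ:TheCurvaturerelevantDif}) to obtain $k_g=-\alpha_s$, $k_n=(1/R_v)+\gamma_s$, $\tau_g=0$; (2) substitute these into (\ref{Eq:angularveloctytotal}) to get $\bm{\omega}_p=\delta\left[((1/R_v)+\gamma_s)\bm{e}_2+\alpha_s\bm{e}_3\right]$ and then compute $\uvec{v}_p=\bm{\omega}_p\times(R_v\bm{e}_3)=\delta(1+R_v\gamma_s)\bm{e}_1$, i.e.\ exactly (\ref{Eq:Rollingdiscunitbased}); (3) push these forward to Cartesian coordinates via (\ref{Eq:examplecasecartesianveldar}), giving $\uvec{v}_p=\delta(1+R_v\gamma_s)(\cos\psi\,\bm{i}+\sin\psi\,\bm{j})$ and yaw rate $\delta\alpha_s$; (4) compare with (\ref{Eq:Kinematicofvehicle}): the translational part $[\cos\psi,\sin\psi]^T u_v$ forces $u_v=\delta(1+R_v\gamma_s)$ and the rotational part forces $u_\omega=\delta\alpha_s$. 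Identifying the resulting $(u_v,u_\omega)$ with the controller contribution $(u_{v,c},u_{\omega,c})$ then yields (\ref{Eq:Geoemtricocontrolvector}).

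Two points need a short justification along the way. First, the no-sliding assumption is what guarantees $\uvec{v}_p$ has no $\bm{e}_2$ component, so that the image of (\ref{Eq:Rollingdiscunitbased}) automatically satisfies the nonholonomic constraint $\dot{x}\sin\psi-\dot{y}\cos\psi=0$ built into (\ref{Eq:Kinematicofvehicle}); without it the matching in step (4) would be inconsistent. Second, the three geometric quantities $\{\alpha_s,\gamma_s,\delta\}$ act as independent inputs, so the originally $3\times2$ system is effectively rendered $3\times3$ on $\Sigma_p$, with $\delta=ds/dt$ isolating the time reparametrization from the purely geometric data $(\alpha_s,\gamma_s)$. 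This is the simplified analogue of the ball--plate transformation of \cite{TafrishiMAMT2021}, which I would invoke rather than rederive.

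The main obstacle is not the algebra of steps (1)--(4), which is brief, but making the frame identification airtight: one must check that aligning $\bm{e}_1$ with the heading is consistent with the virtual wheel actually rolling (hence the choice $R_v=R$), and that the sign conventions in (\ref{EQ:TheCurvaturerelevantDif}) are such that the geodesic-curvature contribution $-k_g\bm{e}_3=\alpha_s\bm{e}_3$ produces the positive yaw rate $\dot{\psi}=\delta\alpha_s$ rather than $-\delta\alpha_s$. Once the orientation of the Darboux triad and the sign of $k_g$ are pinned down, the proposition follows immediately.
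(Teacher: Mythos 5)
Your proposal is correct and follows essentially the same route as the paper: align the Darboux triad $\{\bm{e}_1,\bm{e}_3\}$ with the vehicle frame, then identify the $\bm{e}_1$-component of $\uvec{v}_p$ from (\ref{Eq:Rollingdiscunitbased}) with $u_{v,c}$ via the no-sliding constraint, and the $\bm{e}_3$-component of $\bm{\omega}_p$ with $\dot{\psi}=u_{\omega,c}$. The paper states these two identifications directly, whereas you additionally spell out the intermediate substitution of the curvatures into (\ref{Eq:angularveloctytotal})--(\ref{EQ:TheCurvaturerelevantDif}) and the Cartesian matching via (\ref{Eq:examplecasecartesianveldar}); this is a more explicit write-up of the same argument, not a different one.
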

\begin{proof}
The velocity input $u_v$ corresponds to the vehicle velocity that is always aligned to unit vector $\bm{e}_1$ of the Darboux frame. Since there is a no-sliding constraint between the wheel and ground, one can find one of the geometric controller inputs $u_{v,c}\bm{e}_1$ in the function of the arc-length-based input $\gamma_s$ with the Darboux frame linear velocity $\uvec{v}_p$ (\ref{Eq:Rollingdiscunitbased}) and (\ref{Eq:Kinematicofvehicle}) as follows
\begin{equation}
u_{v,c}(\delta,\gamma_s)=\delta \left[1+ R_v \; \gamma_s  \right].
\label{Eq:UWCgeometriccontroler}
\end{equation}

For the next control input, \makehighlight{Based on the properties of the Darboux frame $\Sigma_p$ definition that is located on the center of vehicle $\Sigma_V$, the angular velocity in differential equation from (\ref{Eq:Kinematicofvehicle}) is equal to the geodesic torsion input $\alpha_s $ of $\bm{\omega}_p$ in (\ref{Eq:Rollingdiscunitbased}) by
\begin{equation}
\delta \alpha_s \triangleq \dot{\psi}(t),
\label{Eq:definitionspindarbou}
\end{equation}
because the rotation of the vehicle is always around $\bm{e}_3$ vector.} Now, by re-ordering Eq. (\ref{Eq:definitionspindarbou}) and knowing $\dot{\psi}=u_{\omega,c}$ from (\ref{Eq:Kinematicofvehicle}) give
\begin{equation}
u_{\omega,c}(\delta,\alpha_s)=\delta \alpha_s.
\label{Eq:UPSICgeometricontroler}
\end{equation}
Eqs. (\ref{Eq:UWCgeometriccontroler}) and (\ref{Eq:UPSICgeometricontroler}) give us the geometric controller $\uvec{u}_c$ in Eq. (\ref{Eq:Geoemtricocontrolvector}) with the arc-length-based inputs ($\alpha_s,\gamma_s$) of the virtual surface and rolling rate input $\delta$. 
\end{proof}
 
\begin{figure}[t!]
	\centering	
	\vspace{3mm} 
	\includegraphics[width=2.4 in]{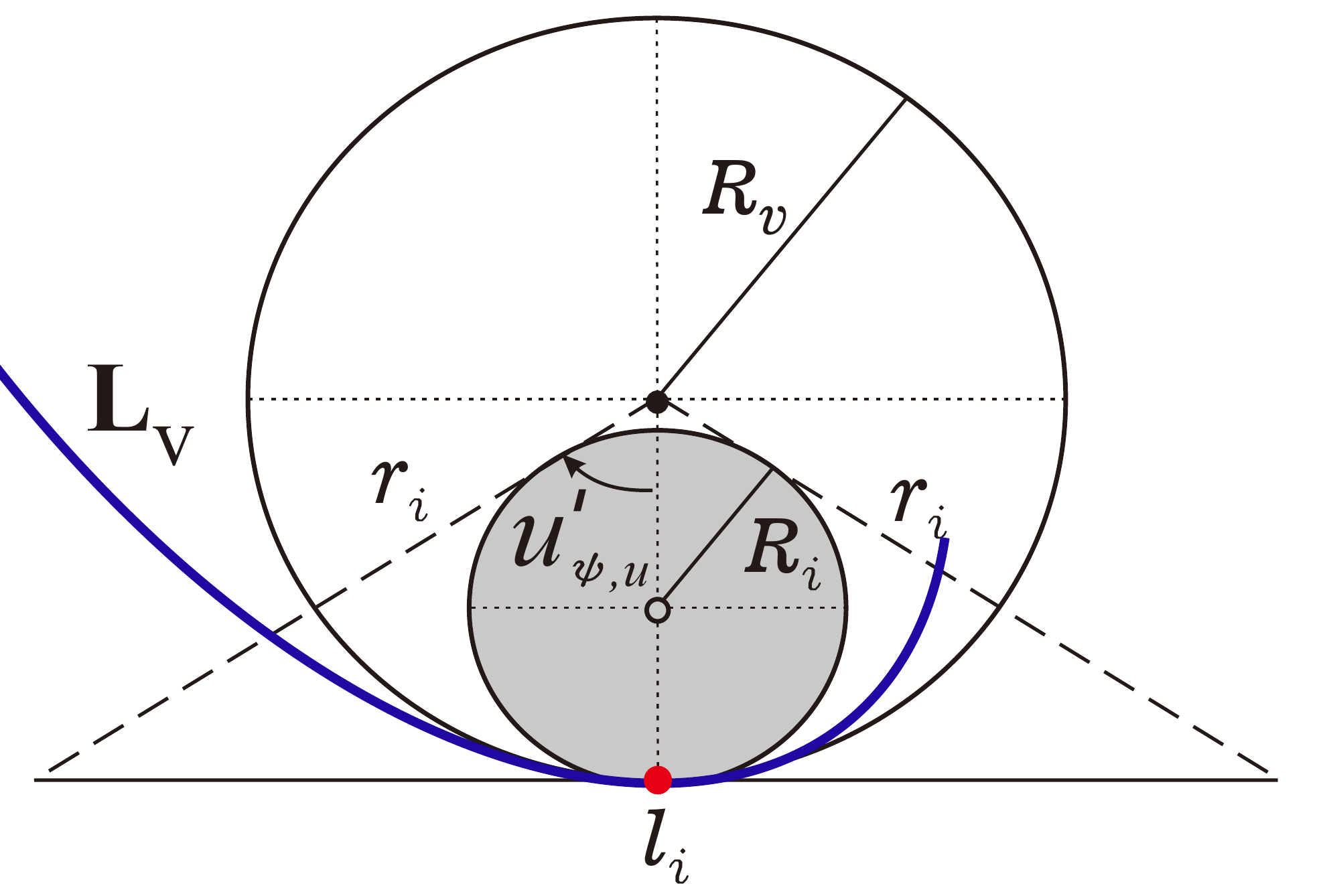}				
	\caption{\makehighlight{Radius curvature $R_i$ is calculated by incircle of $u'_{\psi,u}$ angle that determines the control inputs $\{\alpha_s,\gamma_s\}$.}}\label{Fig:QCOHeeightttt}
\end{figure}
\begin{prop}
	\label{Prop:ThefunctionsofGeometricController}
	By using the conditions of the proposition \ref{PropostionDarbouxControlinput}, functions of the geometric controller inputs (\ref{Eq:Geoemtricocontrolvector}) are designed based on sphere normal and geodesic curvatures $\{k_n,k_g\}$ with $C^{\infty}$ smoothness under safety constraints in Section \ref{Sec:ProblemStatement} as follows \footnote{Please refer to the proof of the Proposition \ref{Prop:ThefunctionsofGeometricController} to check what is the function of each variable in Eqs. (\ref{Eq:Thegammafunction})-(\ref{Eq:Thedeltafunction}).}
	\begin{eqnarray}
	\label{Eq:Thegammafunction}
	\gamma_s&=&1/R_i(u'_{\psi,u}), \\
	\label{Eq:ThealphaControlfunc}
	\alpha_s&=&\tan\left[\zeta\left(u'_{\psi,u},\psi(t) \right)\right]/R_i(u'_{\psi,u}),\\
	\delta &=&  \lambda_s(u_{v,u},v_v(t))/\lambda_t,
	\label{Eq:Thedeltafunction}
	\end{eqnarray}
	where $R_i$, $\zeta$, $\lambda_s$ and $\lambda_t$ are the changing imaginary sphere's radius, the angle of projected area due to steering difference, the arc-length and time step differences based on vehicle velocity, respectively. \makehighlight{Additionally, the stable motion convergence of the shared control $\uvec{u}$ is achieved under bounded inputs of assistive controller by 
		\begin{align}
&	\Vert \uvec{u}_c \Vert^2 \leq	(1-m)^2 \delta^2 \Big [ (1+R_v\gamma_s \left(\max{\left\{R_i\right\}}\right))^2 \nonumber\\
&+\alpha_s^2\left(\min{\left\{R_i\right\}},\max{\left\{\zeta\right\}}\right) \Big ]< \Vert \uvec{u}_b \Vert^2,
		\label{Eq:TransformNewInqControllerStableBound}
		\end{align}
		where $m=1/n$, $\uvec{u}_b$ is the user's intended exponentially convergent safe velocities that is expected by the robot to follow and $\Vert \cdot  \Vert$ is the Euclidean norm of the vector space.
}
\end{prop}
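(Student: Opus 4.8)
The plan is to prove Proposition~\ref{Prop:ThefunctionsofGeometricController} in two stages: first construct the three geometric functions $\gamma_s$, $\alpha_s$, $\delta$ explicitly so that each one is $C^{\infty}$ and respects the four safety/control objectives of Section~\ref{Sec:ProblemStatement}, and then verify the bounding inequality~(\ref{Eq:TransformNewInqControllerStableBound}) that guarantees stable motion convergence of the shared control $\uvec{u}$. For the first stage I would start from the geometry in Fig.~\ref{Fig:QCOHeeightttt}: the user steering input $u'_{\psi,u}$ defines an incircle whose radius $R_i(u'_{\psi,u})$ is the instantaneous curvature radius the vehicle should follow, so setting $\gamma_s = 1/R_i(u'_{\psi,u})$ is the natural choice that makes the Darboux linear velocity $\delta(1+R_v\gamma_s)\bm{e}_1$ in~(\ref{Eq:Rollingdiscunitbased}) track the intended arc. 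I would then argue that $\alpha_s$ must carry the orientation-correction term: the mismatch angle $\zeta(u'_{\psi,u},\psi(t))$ between the user's intended heading and the vehicle's current heading $\psi(t)$ enters through $\tan\zeta$ (the standard pure-pursuit-type geometric relation between a lateral offset / heading error and the required turning curvature), giving $\alpha_s=\tan[\zeta]/R_i$. The time/arc-length separation is handled by $\delta$: since $\delta = ds/dt$ and the controller lives in the arc-length domain, writing $\delta=\lambda_s(u_{v,u},v_v(t))/\lambda_t$ lets the velocity-tracking objective (iii) be enforced entirely inside the arc-length increment $\lambda_s$ while $\lambda_t$ absorbs the time step; I would pick $\lambda_s$ as a smooth saturation of $u_{v,u}$ by $v_v(t)$ so that $|u_{v,c}|\le|v_v|$ holds automatically.

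For smoothness I would note that $R_i(\cdot)$ is built from an incircle construction that is smooth and bounded away from zero whenever the steering input is admissible, $\zeta(\cdot,\cdot)$ is a difference of angles composed with $\tan$ on an interval strictly inside $(-\pi/2,\pi/2)$, and $\lambda_s$ is chosen as a $C^{\infty}$ saturation (e.g. a smooth sigmoid-type clamp), so each of $\gamma_s,\alpha_s,\delta$ is a composition of $C^{\infty}$ maps and hence $C^{\infty}$ — which is exactly what feeds back into the Darboux kinematics to keep the resulting trajectory $\uvec{L}_v$ curvature-smooth, establishing objective (ii). Objectives (iii) and (iv) I would then check directly by substituting~(\ref{Eq:Thegammafunction})--(\ref{Eq:Thedeltafunction}) into the controller~(\ref{Eq:Geoemtricocontrolvector}) and then into the shared-control law~(\ref{Eq:theControllerMainDefi}): the velocity inequality $|(1/n)[u_{v,u}+(n-1)u_{v,c}]|\le|v_v|$ follows from the saturation built into $\lambda_s$, and the orientation inequality $|\int_0^t(1/n)[u_{\omega,u}+(n-1)u_{\omega,c}]\,dt|\le|\psi|$ follows because $\int \delta\alpha_s\,dt$ integrates the heading error $\zeta$ back toward zero, so the accumulated correction cannot exceed the current orientation magnitude.

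For the bounding inequality~(\ref{Eq:TransformNewInqControllerStableBound}), the strategy is: expand $\Vert\uvec{u}_c\Vert^2 = u_{v,c}^2+u_{\omega,c}^2 = \delta^2[(1+R_v\gamma_s)^2+\alpha_s^2]$ using~(\ref{Eq:Geoemtricocontrolvector}), then bound $\gamma_s=1/R_i$ from above by $1/\min\{R_i\}$ (equivalently use $\max\{R_i\}$ where the radius itself appears) and $\alpha_s^2=\tan^2\zeta/R_i^2$ by its value at $\min\{R_i\}$ and $\max\{\zeta\}$, which is where the $\alpha_s^2(\min\{R_i\},\max\{\zeta\})$ term in the statement comes from. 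The factor $(1-m)^2=((n-1)/n)^2$ appears because in the shared law~(\ref{Eq:theControllerMainDefi}) the controller contributes with weight $(n-1)/n$, so the contribution of $\uvec{u}_c$ to $\uvec{u}$ is scaled by $(1-m)$; combining the monotone bounds on $R_i$ and $\zeta$ with this scaling yields the middle expression, and the final strict inequality $<\Vert\uvec{u}_b\Vert^2$ holds by choosing $n$ (equivalently the admissible ranges of $R_i,\zeta$, and the saturation level in $\lambda_s$) so that the worst-case controller magnitude stays strictly below the user's intended exponentially convergent safe velocity $\uvec{u}_b$ — a matching that can always be done because $\uvec{u}_b$ is, by assumption, a strictly safe reference.

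\emph{Main obstacle.} The routine part is the algebraic expansion of $\Vert\uvec{u}_c\Vert^2$; the delicate part is justifying that the worst-case substitution (simultaneously taking $\min\{R_i\}$ in $\alpha_s$ and the $R_i$-dependent extremum in the $\gamma_s$ term, together with $\max\{\zeta\}$) actually \emph{upper-bounds} the true norm rather than merely estimating it — i.e. that the map $(R_i,\zeta)\mapsto(1+R_v/R_i)^2+\tan^2\zeta/R_i^2$ is monotone in each argument over the admissible region, with no interior saddle — and that the resulting bound can be made strictly less than $\Vert\uvec{u}_b\Vert^2$ uniformly in time, which requires the safety saturation in $\lambda_s$ to be compatible with the exponential-convergence envelope assumed for $\uvec{u}_b$. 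That compatibility (choosing $n$ and the saturation so the two inequalities close simultaneously for \emph{all} admissible user inputs) is the real content and the step I expect to need the most care.
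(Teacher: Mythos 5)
Your overall architecture matches the paper's: the incircle radius $R_i(u'_{\psi,u})$ driving $\gamma_s=1/R_i$, the heading-error angle driving $\alpha_s$, a velocity-saturating $\lambda_s$ inside $\delta=\lambda_s/\lambda_t$, and a stability argument that expands $\Vert\uvec{u}_c\Vert^2=\delta^2[(1+R_v\gamma_s)^2+\alpha_s^2]$, weights it by $(1-m)^2$, and bounds it by extremal values of $R_i$ and $\zeta$ under $|u'_{\psi,u}|<\pi/2$ and $|u'_{\psi,u}-\psi|<\pi/2$. The stability half of your plan is essentially the paper's proof, except that the paper does not ``choose $n$ to close the inequality'': it simply \emph{posits} $\Vert\uvec{u}\Vert^2<\Vert\uvec{u}_b\Vert^2$ with $\Vert\uvec{u}_b\Vert^2\approx a_\rho^2\Vert\uvec{u}_u\Vert^2(1-e^{-b_\rho\tau})^2$ as the stability constraint, argues the cross and user terms are dominated by $(1-m)^2\Vert\uvec{u}_c\Vert$ for $n>1$, and secures boundedness of $\delta$ via $\lim_{v_v\to\infty}\delta^2=0$ (a consequence of the exponential decay built into $\lambda_s$ above the critical velocity $v_c=v_m(1-\sigma)$). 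Your flagged ``main obstacle'' (monotonicity of the extremal substitution) is real and is not resolved in the paper either; the paper asserts $\gamma_s^2(s)<\gamma_s^2(\max\{R_i\})$ and $\alpha_s^2(s)<\alpha_s^2(\min\{R_i\},\max\{\zeta\})$ without proof.

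The genuine gap is in the construction of $\alpha_s$. You treat $\zeta$ as a direct heading mismatch and invoke a pure-pursuit-type relation, but the proposition names $\zeta$ ``the angle of projected area,'' and the paper's proof constructs it through a specific spherical mechanism: the Gauss--Bonnet theorem on an imaginary sphere converts the heading difference into an under-cap area $S_c=R_v^2\,|u'_{\psi,u}-\psi(t)|$, from which the cap height $h_c=S_c/(2\pi R_v)$ and base diameter $a_c=2[(S_c/\pi)-h_c^2]^{1/2}$ give $\zeta=2\tan^{-1}(a_c/2R_v)$; only then does $\alpha_s=\tan(\zeta)/R_i$ act as the geodesic-curvature input that enforces safety constraint (iv). Your proposal never produces this map (nor the explicit incircle formula for $R_i$ in terms of $S_i$, $r_i=R_v/\cos u'_{\psi,u}$, $l_i=2R_v\tan u'_{\psi,u}$, nor the piecewise-exponential $\lambda_s$), and since this proposition is constructive --- its content \emph{is} these explicit functions --- leaving $\zeta$, $R_i$, and $\lambda_s$ as generic ``natural choices'' means the designed-functions half of the statement is not actually proved. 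A smaller point: the paper's $\lambda_s$ is only $C^0$ at $v_v=v_c$, so your proposed smooth sigmoid clamp would in fact serve the claimed $C^{\infty}$ smoothness better than the paper's own choice, but it is not the function the paper uses.
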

\begin{proof}
We have to design proper geometric functions to prescribe our new corresponding inputs $\{\delta,\alpha_s,\gamma_s\}$ depending on the expressed objectives. The arc-length-based input $\gamma_s$ in (\ref{EQ:TheCurvaturerelevantDif}) is the induced normal curvature of the virtual surface. We design $\gamma_s$ with the normal curvature of the rolling disc as $\gamma_s=1/R_i$ where $R_i$ is the varying curvature radius. From Fig. \ref{Fig:QCOHeeightttt}, the radius $R_i(u'_{\psi,u})$ is designed with the change of the incircle radius that is dependent on the steering angle of the user $u'_{\psi,u}$ as follows
 \begin{equation}
 R_i(u'_{\psi,u})=\frac{R_v}{\mu_r}+\left[\frac{(S_i-r_i)^2(S_i-l_i)}{S_i}\right]^{\frac{1}{2}},\;\;0 \leq |u'_{\psi,u}|<\frac{\pi}{2}
 \label{EQ:HeightQCO}
 \end{equation}
 where $S_i=(2r_i+l_i)/2$ is the area of encompassed triangle of the incircle, $\mu_r$ is the small value scaler to avoid singular point by the minimum curvature radius of $R_i$, and also $r_i=R_v/\cos u'_{\psi,u}$ and $l_i=2R_v\tan{u'_{\psi,u}}$ are adjacent and hypotenuse sides of isosceles triangle as shown in Fig. \ref{Fig:QCOHeeightttt}).

To design the arc-length-based input $\alpha_s$ in (\ref{Eq:UPSICgeometricontroler}), we have to formulate it with caution since $\delta \alpha_s$ correspond to the orientation velocity $\dot{\psi}$. This means we have to develop the steering input $u'_{\psi,u}$ with a constraining function to avoid any dangerous spinning. The wheeled mobile robot's rotation can easily become a danger for the user/patient who controls the vehicle. For example, the vehicle can topple in high-velocity maneuver if the direction does not get assigned correctly.

 The arc-length-based input $\alpha_s$ stands for the geodesic curvature of the virtual surface with radius $R_v$ of virtual wheel in (\ref{EQ:TheCurvaturerelevantDif}) \cite{TafrishiMAMT2021}, then, we use the geodesic curvature of an imaginary spherical surface in this controller as follows 
\begin{equation}
\alpha_s  = \tan(\zeta)/R_i,
\label{Eq:Alphageodesicarclength}
\end{equation}
where $\zeta$ is the projection angle (see Fig. \ref{Fig:GeodesicTorsioAngle} for the main idea).
\begin{figure}[t!]
	\centering	
	\vspace{3mm} 
	\includegraphics[width=2.3 in]{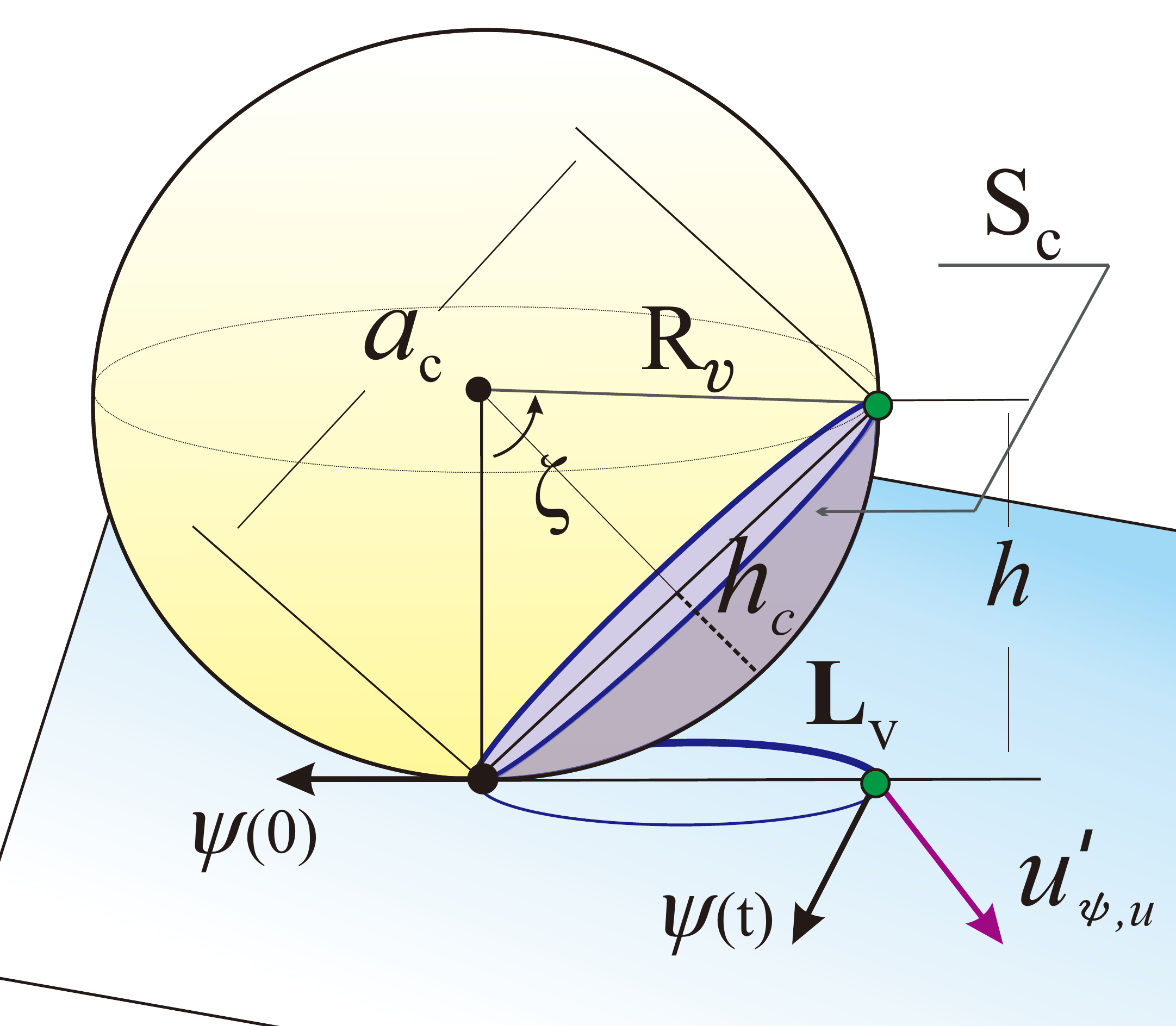}				
	\caption{\makehighlight{Computation of projection angle $\zeta$ based on the angular difference between user steer angle $u'_{\psi,u}$ and vehicle's orientation $\psi$.}}\label{Fig:GeodesicTorsioAngle}
\end{figure}
A trigonometric relation as shown in Fig. \ref{Fig:GeodesicTorsioAngle} is computed between the project angle $\zeta$ and steering angle $u'_{\psi,u}$ that is given from user's the joystick. Next, it is assumed that the angle $\zeta$ is the projection angle of constructed under-cap area $S_c$ that is found from
\begin{equation}
\zeta= 2 \tan^{-1}\left( a_c / 2 R_v \right),
\end{equation} 
 where $a_c$ is base diameter of under-cap area. We calculate the base diameter $a_c$ of the cap with the following formulation 
\begin{align}
 a_c=2\left[(S_c/\pi)-h^2_c\right]^{\frac{1}{2}},\;\;\;h_c=S_c/(2\pi R_v),
 \label{Eq:Finalcapparameters}
 \end{align}
where $S_t$ and $h_c$ are the total area of the cap-shaped region (blue region in Fig. \ref{Fig:GeodesicTorsioAngle}) and the height of the cap area. Next, we construct a relation between the steering angle and under-cap area $S_c$. We utilize the Gauss-Bonnet theorem \cite{DIfgeometry1976} while the imaginary sphere touches the circular closed blue path $\uvec{L}_v$ in Fig. \ref{Fig:GeodesicTorsioAngle} as follows
\begin{equation}
\Delta \psi= u'_{\psi,u}-\psi(t)=\iint_{{S}_{c}} \kappa_{o} \; dS_c = S_c/R_v^2,
\label{Eq:Gauss-Bonnetbase}
\end{equation} 
where $\kappa_{o} =1/R_v^2$ is the Gaussian curvature and $\psi(t)\in 2\pi$ is the current spin angle. The Gauss-Bonnet equation (\ref{Eq:Gauss-Bonnetbase}) gives us the ability to built a sphere under-cap area based on the steering angle given by raw steering input $u'_{\psi,u}$ and the current orientation of the vehicle $\psi(t)$. With re-ordering (\ref{Eq:Gauss-Bonnetbase}), our under-cap area becomes 
\begin{equation}
S_c=R_v^2\left | u'_{\psi,u}-\psi(t) \right|.
\label{Eq:FinalformulaguessSc}
\end{equation} 
The derived Eq. (\ref{Eq:FinalformulaguessSc}) is substituted back to Eq. (\ref{Eq:Finalcapparameters}). It can be seen that the spherical curvature helps to track the angular difference with a smoother function and transform it to angular velocity by $\delta \alpha$ since the under-cap area would have some values with respect to virtual wheel radius $R_v$.

Finally, we design the rolling rate $\delta$ function with the objectives from the problem statements. In high vehicle velocities, the geometric functions should not create a large level of deviations depending on the inputs e.g., the steering angle input. This can make the vehicle unstable. Note that this issue is much more challenging as the distance between the main actuating wheel and steering wheel $l$ becomes smaller; hence, we define rolling rate $\delta$ by
\begin{equation}
\delta= ds/dt \triangleq   \lambda_s/\lambda_t,
\end{equation}
where $\lambda_s$ and $\lambda_t$ are the arc-length and time step differences. We design the arc-length difference $\lambda_s$ in function of vehicle velocity ( Fig. \ref{Fig:VelocityvsArclength1} shows the example output of the function) as follows
\begin{equation}
\lambda_s (u_{v,u},v_{v})=
\begin{cases}
& u_{v,u}  \;\;\;\;\;\;\;\;\;\;\;\;\;\;\;\; \;\;\;\;\;\;\;\;\;\;\;\;\;\;\;v_{v}(t-1) \leq v_c \\
& u_{v,u} \cdot  e^{-\left[v_{v}(t-1)-v_c\right]/T}  \;\;\;\;v_{v}(t-1) > v_c
\end{cases}
\end{equation} 
where $v_c=v_m(1-\sigma)$ and $T$ are the critical velocity of the vehicle and the time scale, where $\sigma$ is critical velocity ratio with $38\%$ value, in here. These $v_c$ and $T$ values can change depending on the vehicle properties, or user preference.

\makehighlight{We check the controller stability by the assumption that the resultant control $\uvec{u}$ is always bounded with respect to user inputs under safety conditions. Let us consider the following constraint for stability of the controller 
\begin{equation}
\Vert \uvec{u} \Vert ^2 < \Vert \uvec{u}_b \Vert^2 \approx a^2_{\rho} \Vert  \Vert \uvec{u}_u(\tau) \Vert^2 \left(1-e^{-b_{\rho}\tau}\right)^2
\label{Eq:GeneralBoundedRule}
\end{equation}
where $a_{\rho}>0$, $b_{\rho}>0$  are constants and $\tau=t_n-t_{m}$ is a certain time interval. Then, by substituting Eq. (\ref{Eq:theControllerMainDefi}) into (\ref{Eq:GeneralBoundedRule}) and doing a transformation, we get
\begin{equation}
m^2  \Vert \uvec{u}_u \Vert^2+ 2m(1-m)\Vert\uvec{u}_u\Vert \Vert\uvec{u}_c\Vert+(1-m)^2 \Vert\uvec{u}_c\Vert< \Vert \uvec{u}_b \Vert^2
\label{Eq:TransformNewInq}
\end{equation}
where $m=1/n \in (0,\; 1]$. With knowing always $n>1$, the first two terms will always be smaller that third term $(1-m)^2\Vert\uvec{u}_c\Vert$; hence, the geometric controller determines whether the system is stable. Then, substituting the controller (\ref{Eq:Thegammafunction})-(\ref{Eq:ThealphaControlfunc}) into (\ref{Eq:TransformNewInq}) gives
\begin{equation}
(1-m)^2 \delta^2 \left[ (1+R_v\gamma_s)^2+\alpha_s^2 \right]< \Vert \uvec{u}_b \Vert^2
\label{Eq:TransformNewInqController}
\end{equation}
We know always $\underset{v_v \to \infty}{\lim}\delta^2=0 $ which grants this term is bounded. Also, based on the controller function in (\ref{Eq:Thegammafunction})-(\ref{Eq:ThealphaControlfunc})
the controller velocity is bounded by the given angle by user steering angle $u'_{\psi,u}\in[-\pi/2,\pi/2]$ which results 
\begin{equation}
\gamma_s^2(s)<\gamma_s^2\left(\max{\left\{R_i\right\}}\right), \;\alpha_s^2(s)< \alpha_s^2 \left(\min{\left\{R_i\right\}},\max{\left\{\zeta\right\}}\right)
\end{equation}
where $\max{\left\{\zeta\right\}}<\pi/2$ based on the constraint on the difference between vehicle and user's angles, $|u'_{\psi,u}-\psi|<\pi/2$. This similarly grants the second term of (\ref{Eq:TransformNewInq}) is bounded that results to have a stable system with the bounded geometric controller by (\ref{Eq:TransformNewInqControllerStableBound}). }
\end{proof}

 \begin{figure}[t!]
	\centering
	\vspace{3mm} 
	\includegraphics[width=2.5 in]{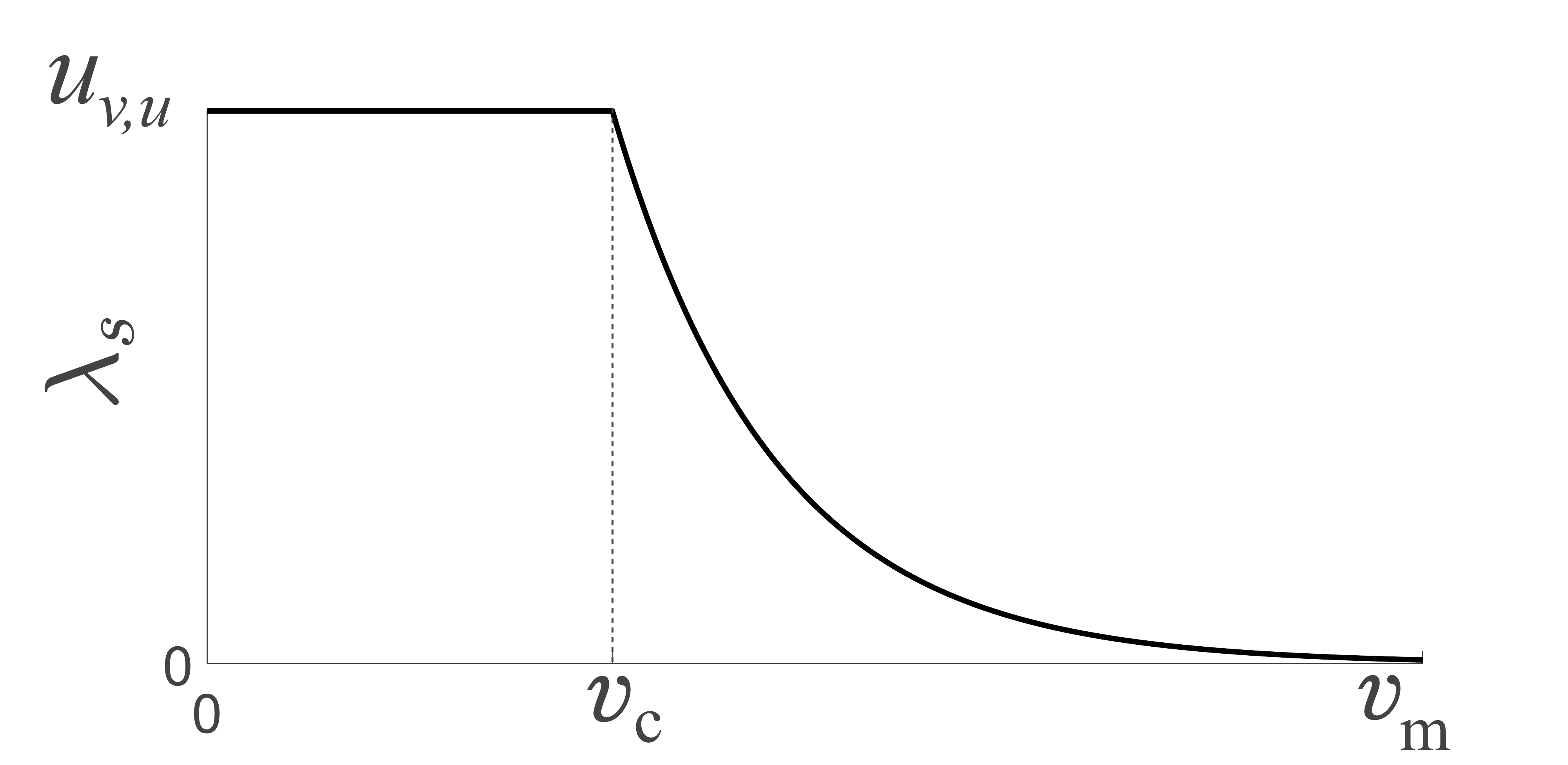}				
	\caption{\makehighlight{ The rolling rate $\delta$ design based on the arc-length differences on the vehicle velocity.}}\label{Fig:VelocityvsArclength1}
\end{figure}
\begin{rem}
If we check Eq. (\ref{Eq:UWCgeometriccontroler}) and Eq. (\ref{EQ:HeightQCO}), we can see that when the user gives large steering angles $u'_{\psi,u}$, the curvature radius $R_i$ becomes larger that makes the arc-length-based input $\gamma_s$ smaller (to decrease the overall vehicle velocity input $u_v$) that satisfies our safety constraint at (iii) in Section III. 	
\end{rem}
\begin{rem}
	The arc-length-based input $\alpha$ in (\ref{Eq:Alphageodesicarclength}) together with Eqs. (\ref{Eq:Finalcapparameters})-(\ref{Eq:FinalformulaguessSc}) shows that $u_{\omega}$ is constrained by under safety constraints. This function gives our second safety constraint, for condition (iv) in Section III, that the further we have large differences between vehicle angle $\psi(t)$ and user-assigned angle $u'_{\psi,u}$, the larger $\alpha_s$ can be while rolling rate limits the angular steering velocity by $\delta$. 
\end{rem}
\begin{rem}
	The rolling rate $\delta(u_{v,u},v_v(t),v_m)$ of the virtual wheel that work as the controller is constraining both virtual surface inputs $(\alpha_s,\gamma_s)$ dependent on the maximum velocity allowance $v_m$ for the safety. 
\end{rem}
\begin{rem}
	\makehighlight{The inequality (\ref{Eq:TransformNewInq}) shows that larger shared control variable $n$ grants bounded inputs $\uvec{u}$ for a stable motion but very large values of $n$ will remove the user direct control/dominance from the shared control which tells there should be an upper limit for $n$. }
\end{rem}
\section{Experimental Studies} 
In this section, we experiment with our proposed assistive geometric controller.  We consider our controller with different subjects extensively with different routes.

In this experimental study, we consider two routes as projected Viviana's (infinity) curve and sharp spiral curve. In each experimental route, we have done the study by six participants with minimum knowledge of the wheelchair system shown in Fig. \ref{Fig:blockdiagram}. However, note that to clarify more interesting results for the lateral route i.e., sharp spiral curve, half of the participants were those who already took part in the Viviani's curve route. Note that due to the COVID-19 pandemic, we did follow all the disinfection and safety rules for the participant\footnote{Before the experiment, our research purpose, method and data handling were fully explained for participants, and we obtained their informed consent.}. \makehighlight{The participants were healthy with an average age of 26. The participants were expected to handle wheelchair with different velocities from low to high in these challenging curves.}
We used an electric wheelchair system that is commercially available as Whill Model C. The actuating wheels are located in $l=0.5$ m distance in this wheelchair. Also, our controller has the virtual wheel with a radius equal to the wheelchair wheel $R=R_v=0.133$ m. As the ground truth for accurate tracking of the wheelchair robot, we are using the HTC Vive Motion Tracker but none of the data is utilized by the controller. In these experiments, we change our vehicle maximum velocity between following range $v_m \in [1$-$3.5]$ m/s. \makehighlight{In this work, the time scale $T$ and the amplitude of time step difference $\lambda_t$ for (\ref{Eq:Thedeltafunction}) are chosen $2$ and $25$, respectively.} Each route was experimented by 6 participants where they tried 5 different maximum velocities between the following $v_m =\{1,2,2.5,3,3.5\}$ ranges with and without controller (refer to video for example result). We have asked each participant to take the NASA task load index (TLX) test after each run \cite{hart1988development}. The NASA-TLX is a multidimensional assessment tool that can help participants to rate their mental and physical experience. This test has 6 different scales: mental demand, frustration, effort, performance, physical demand and temporal demands. Please refer to the Youtube video for the example experiment done by using the proposed controller \cite{SAtafrishiYoutubeICRA2022}.

\begin{figure}[t!]
	\centering
	\vspace{3mm} 
a)	\includegraphics[width=2.1 in, height= 2.2 in]{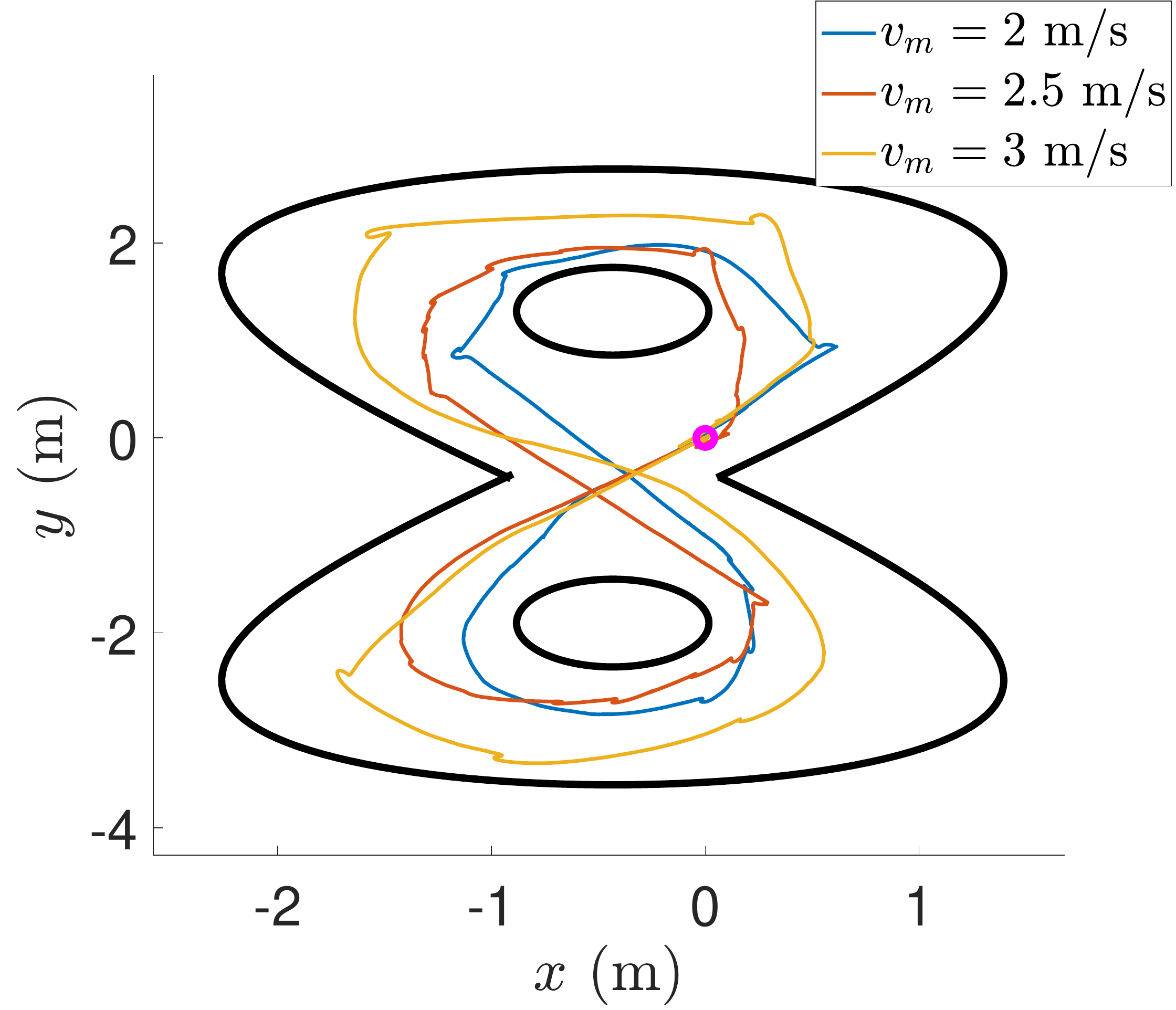}\\
b)	\includegraphics[width=2.1 in, height= 2.2 in]{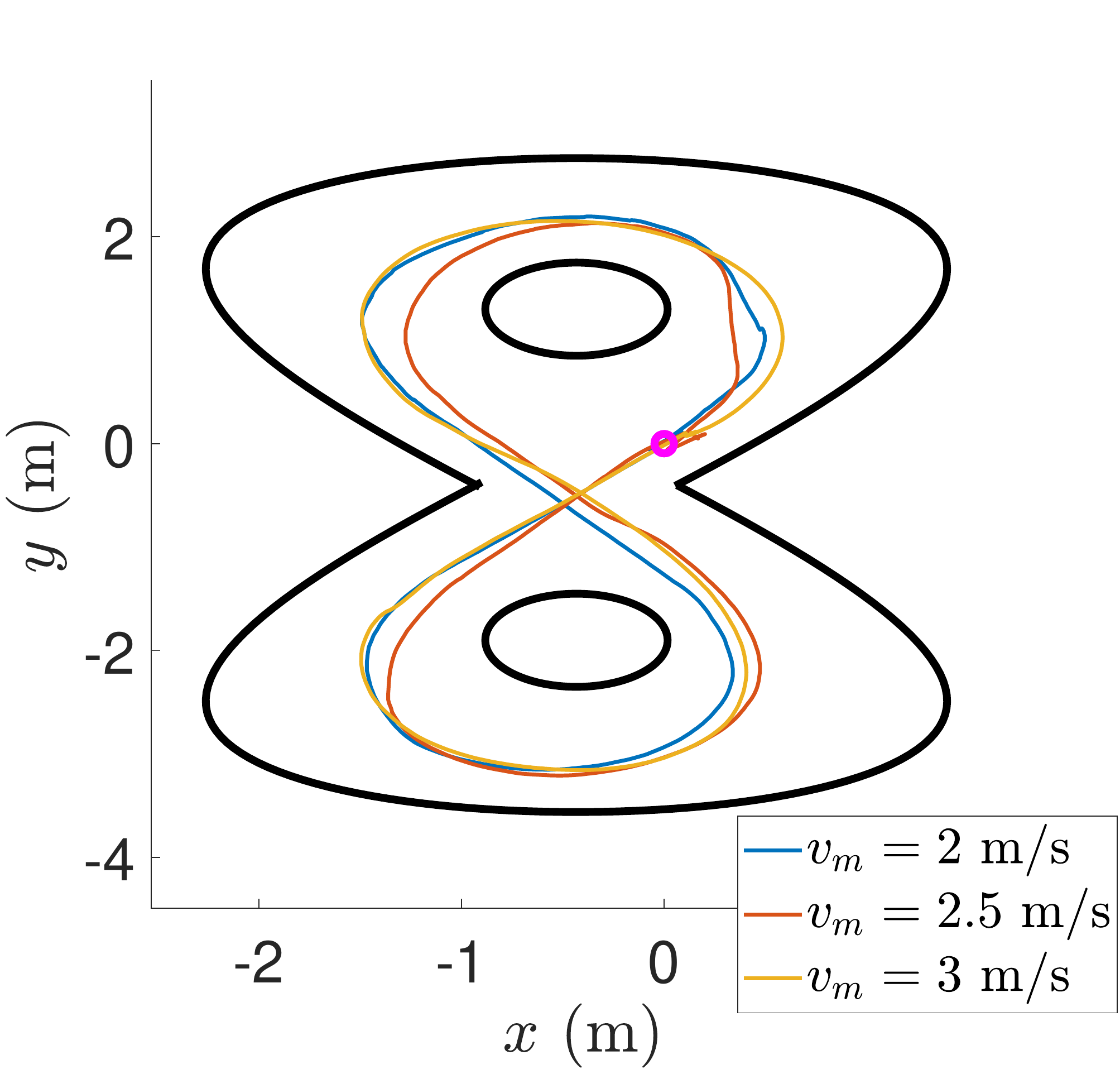}	
	\caption{The trajectories of an example subject on following the Viviani's curve (a) Without the controller (b) With the proposed controller.}\label{Fig:mapplotviviani}
\end{figure}
\begin{figure}[t!]
	\centering
	\includegraphics[width=3.2 in, height= 1.7 in]{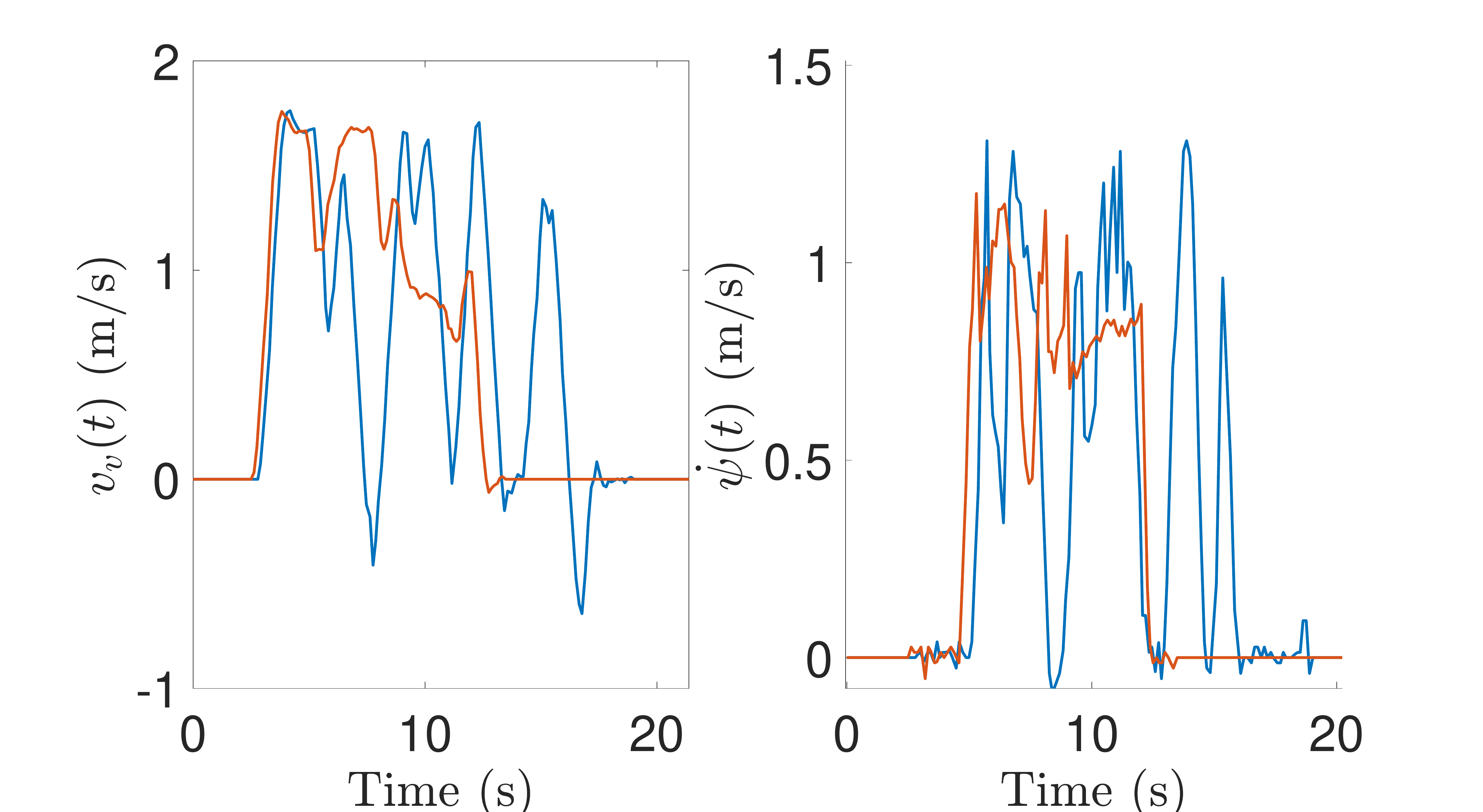}\\
	\includegraphics[width=3.2 in, height= 1.7 in]{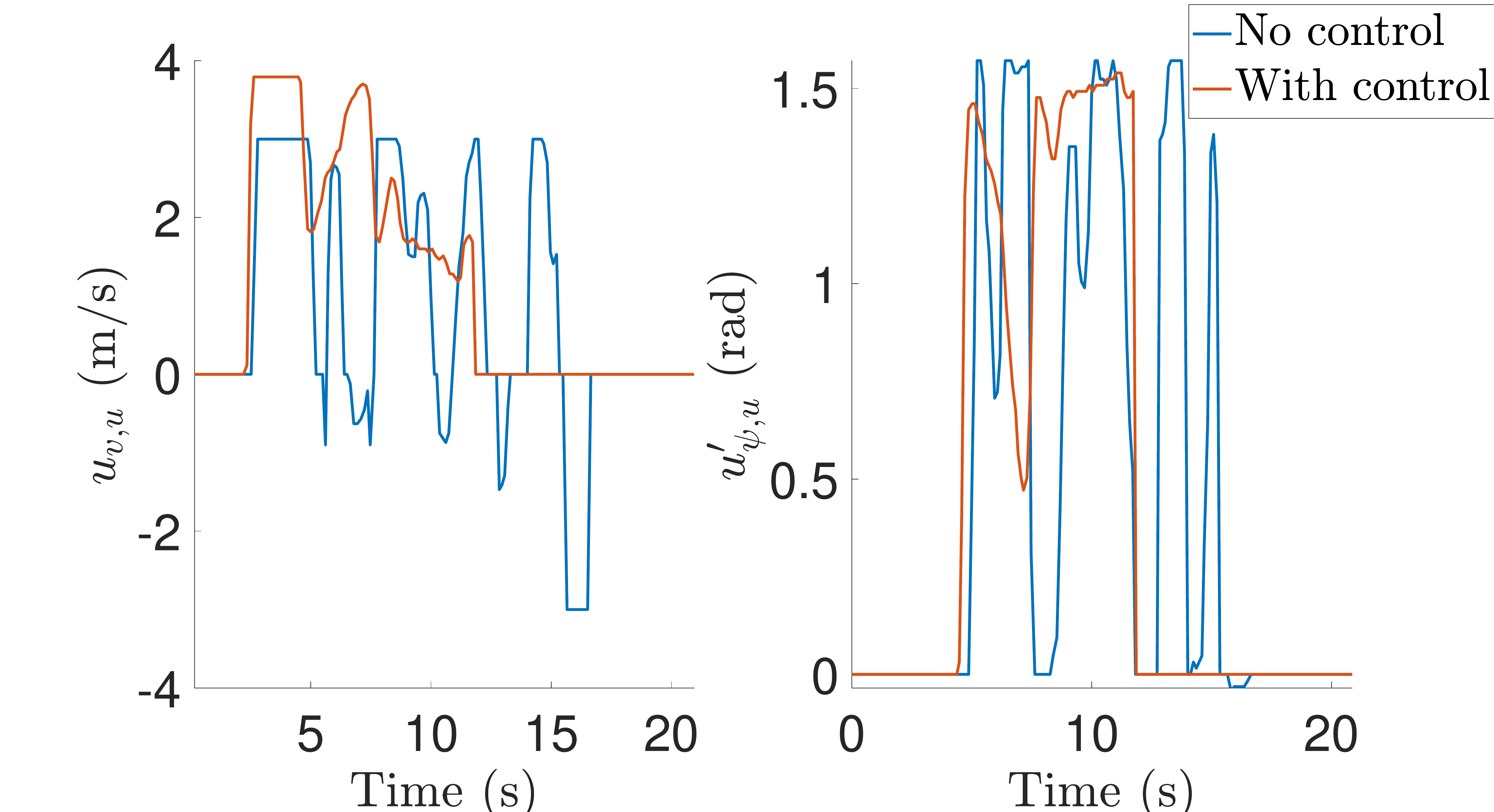}				
	\caption{The wheelchair robot velocities and user inputs for a Viviani's curve with $v_m=3$ m/s speed.}\label{Fig:Veocityinputspiral}
\end{figure}
\begin{figure}[t!]
	\centering
a)	\includegraphics[width=2.3 in, height= 2.2 in]{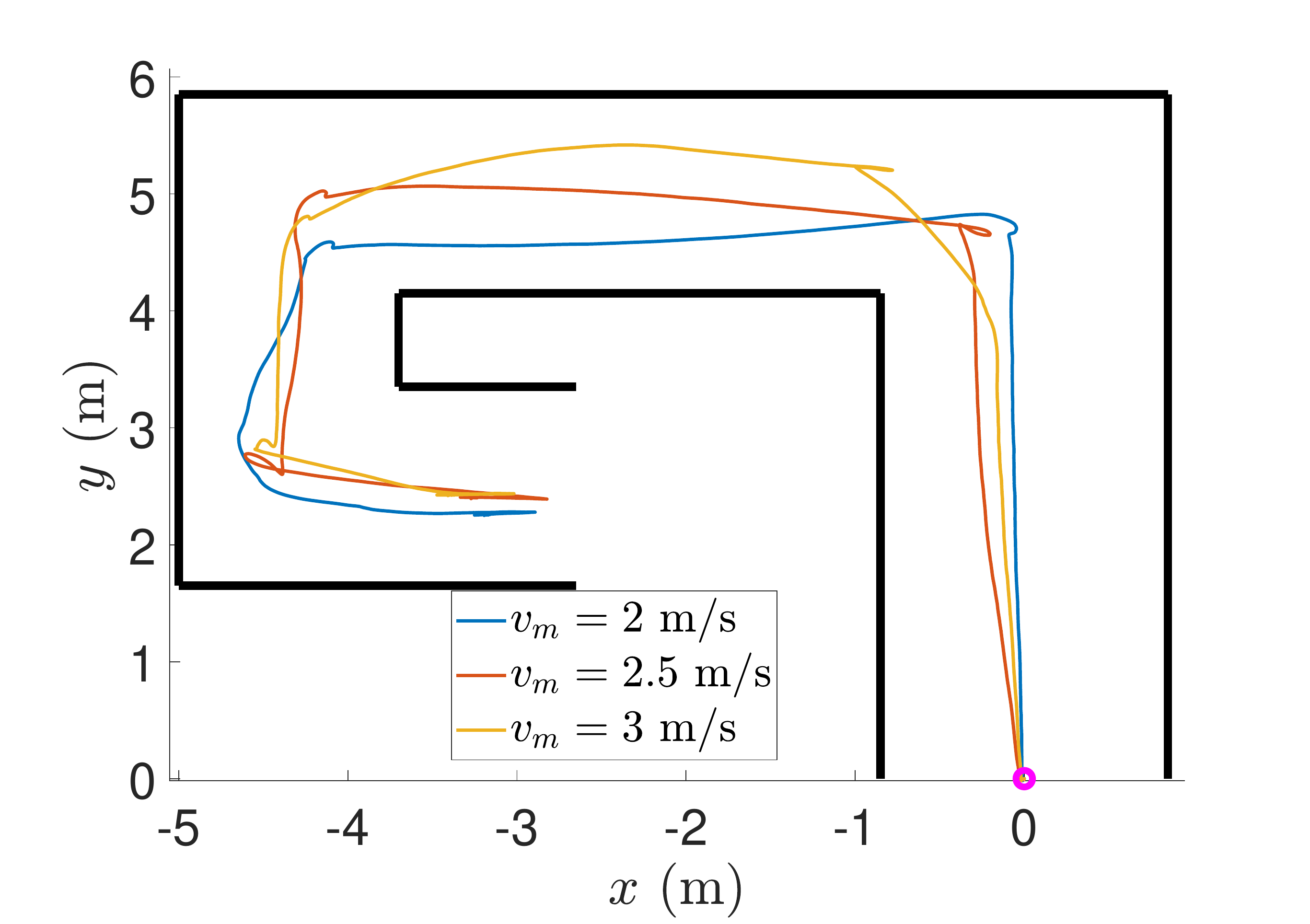}	\\
b)	\includegraphics[width=2.3 in, height= 2.2 in]{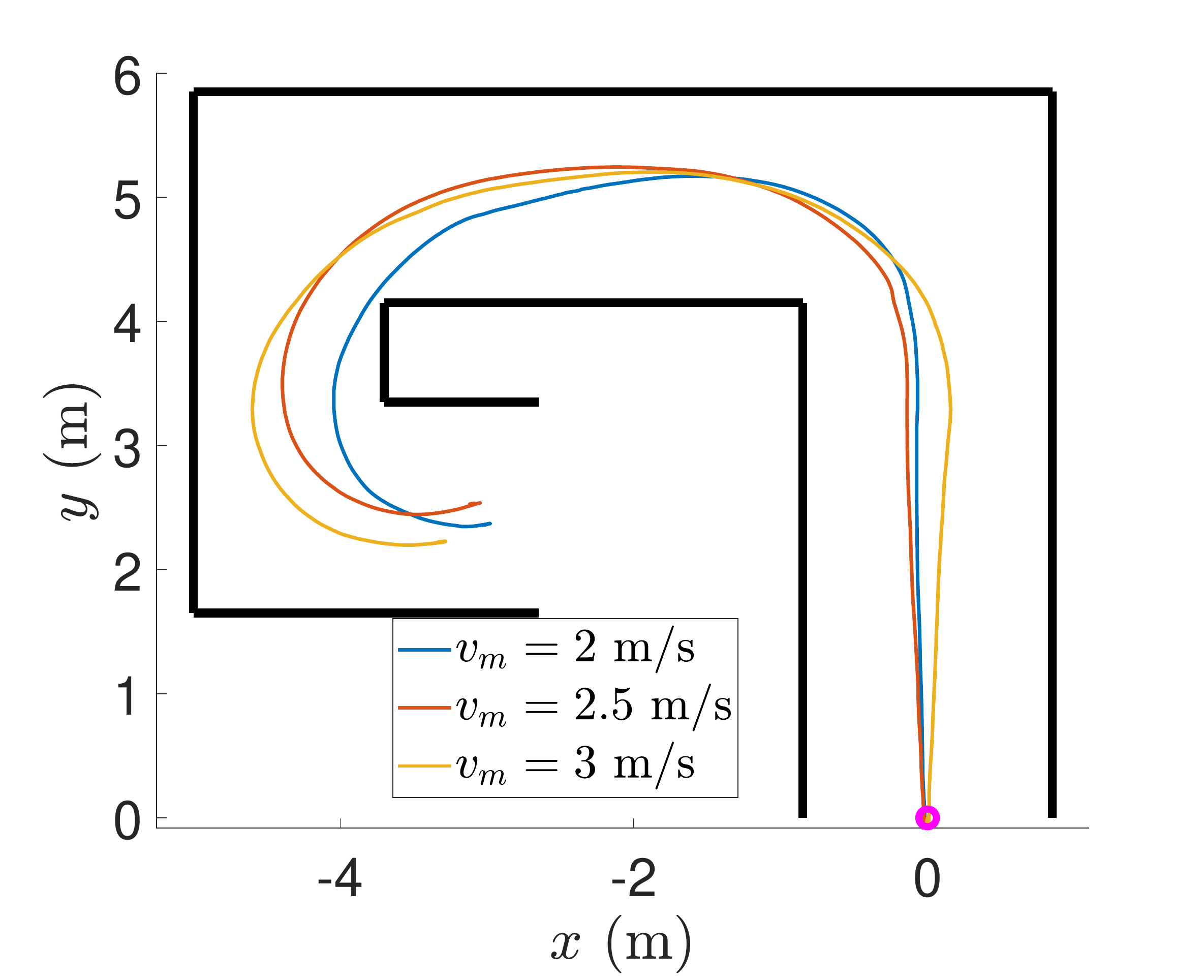}	
	\caption{The trajectories of an example subject on following the sharp spiral curve (a) Without the controller (b) With the proposed controller.}\label{Fig:mapplotspiral}
\end{figure}

In the first experiment, we choose Viviani's curve as the route. Fig. \ref{Fig:mapplotviviani} illustrates example results of a participant for different maximum velocities with and without the controller. We can see that the user has many deviations with sharp and unnatural maneuvers on the traversed path when there is no controller. This issue is clearer when the velocity is higher, $3$ m/s. However, our proposed assistive controller can keep the person in a smooth trajectory along the path. In order to understand how effective is the proposed controller, we can check the vehicle velocities $\uvec{v}=[v_v,\dot{\psi}]$, and user inputs for example high velocity (3 m/s) shown in Fig. \ref{Fig:Veocityinputspiral}. The vehicle velocity with the controller is very smooth in contrast to without any controller. As it is clear from user inputs by the joystick $(u_{v,u}, u'_{\psi,u})$, the users have lots of fluctuations in their input controls since the vehicle increases the velocity very fast and it becomes challenging to assign proper steering angle by the joystick. Thus, the user should put lots of effort to keep the wheelchair in the desired trajectory. If we quantify the user effort by the number of times that joystick goes from high value to as low as zero (in particular the input velocity $u_{v,u}$), then, the user is at least 7-8 times more prone to put higher effort without any control.

\begin{figure}[t!]
	\centering
		\includegraphics[width=3.2 in, height= 1.6 in]{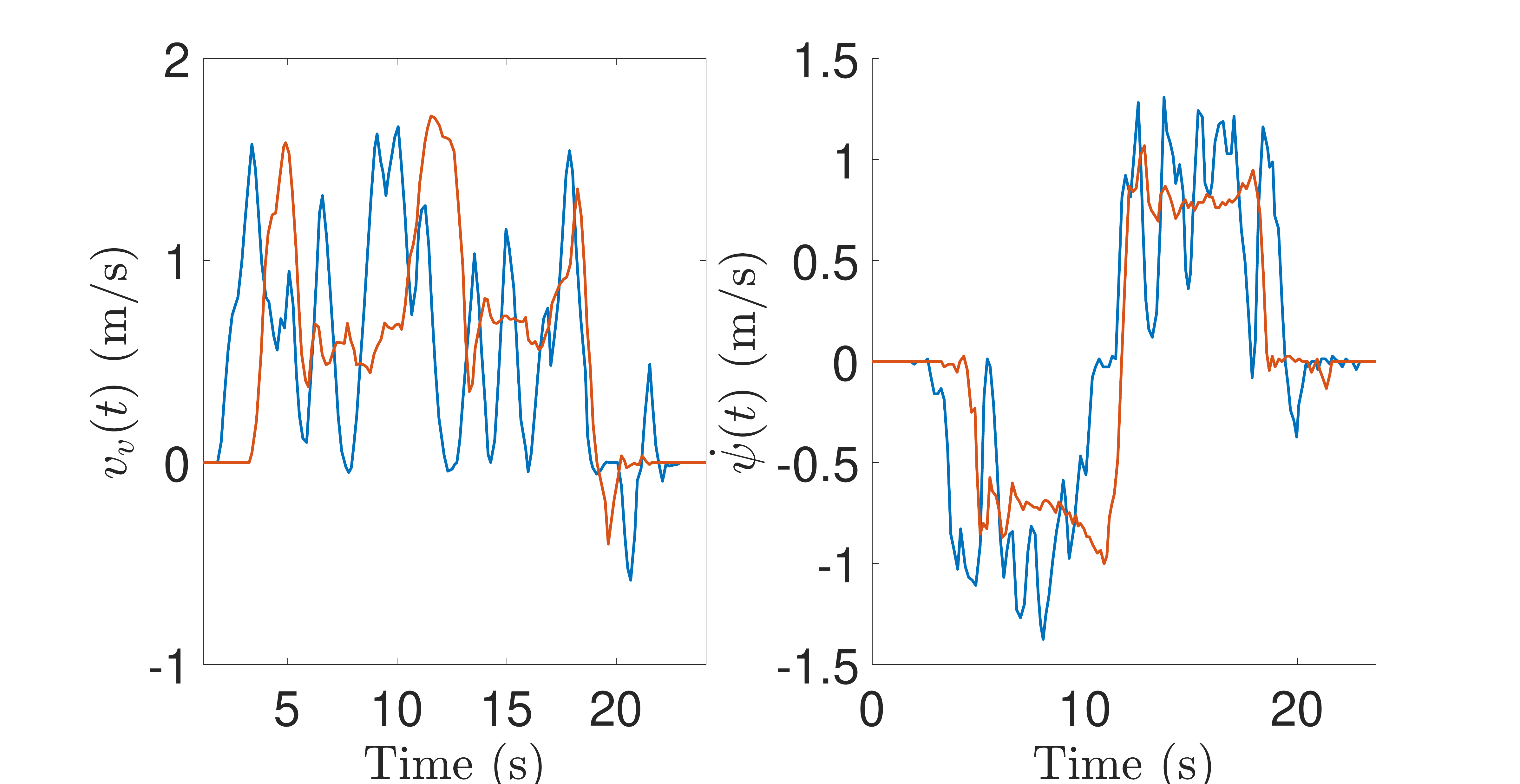}		\\
		\includegraphics[width=3.2 in, height= 1.6 in]{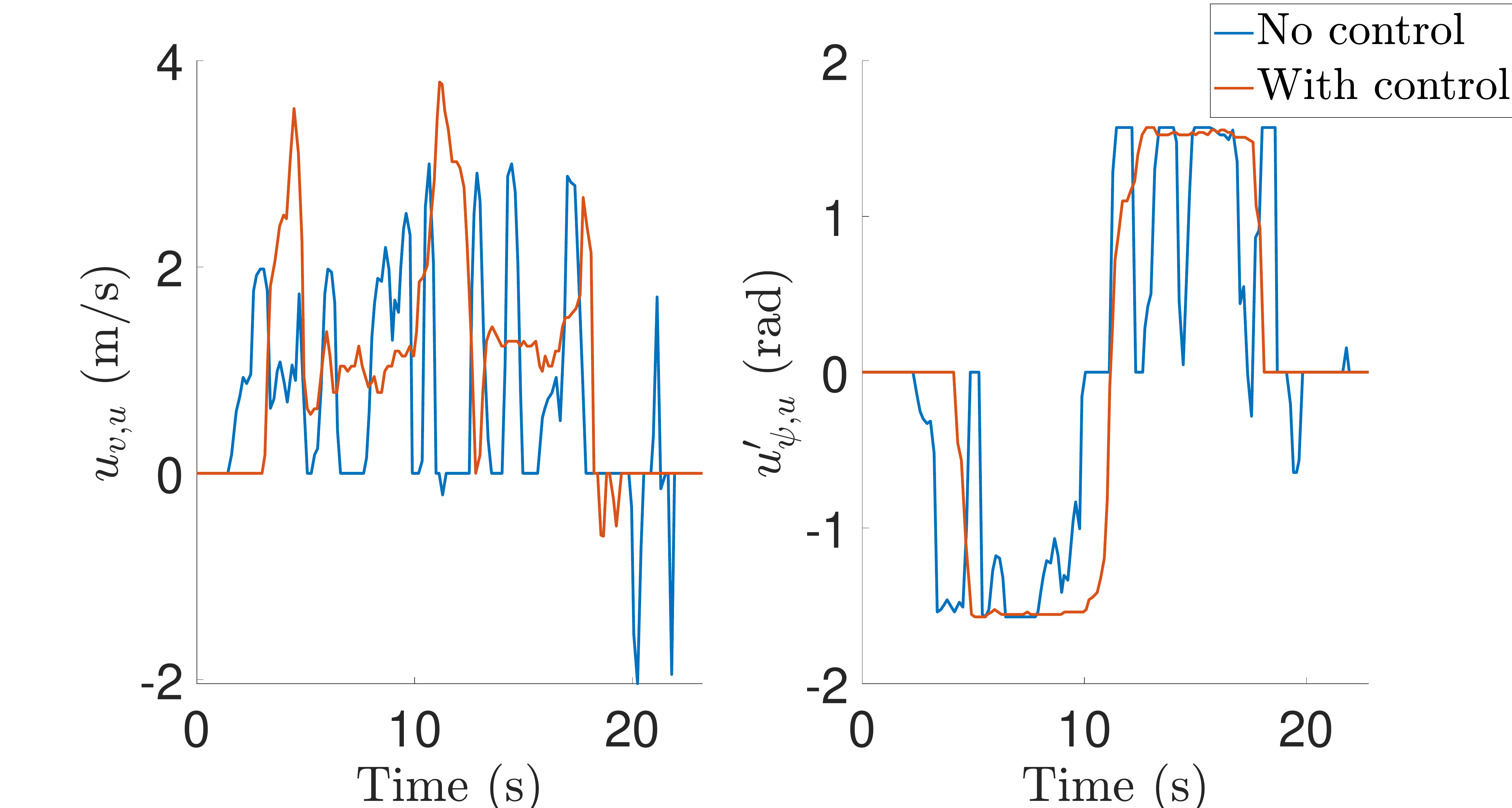}				
	\caption{The wheelchair robot velocities and user inputs for a sharp spiral curve with $v_m=3$ m/s speed.}\label{Fig:VeocityinputVivia}
\end{figure}

Another challenging route as the sharp spiral curve is chosen for evaluating the performance of the controller. Fig. \ref{Fig:mapplotspiral} illustrates the ground truth that is recorded for one of the participants. Similar to the trajectories of Viviani's curve, the vehicle follows smooth and non-fluctuating trajectories when the users are using an assistive controller, in particular during corner rotations. Interestingly, the proposed controller does not have any desired configuration but it can assist the user's inputs in a much smoother maneuver as shown in Fig. \ref{Fig:mapplotspiral}-b. It is important to note that this difference is more visible when velocity is higher $v_m>2$ m/s. The vehicle velocities $\uvec{v}=[v_v,\dot{\psi}]$ as shown in Fig. \ref{Fig:VeocityinputVivia} are smoother with minimum fluctuations by utilizing the controller where it is followed by the user inputs properly. Similarly, the user effort to make the wheelchair follow the proper path is about 8 times easier based on the number of times, he/she should go from maximum to minimum input [see Fig. \ref{Fig:VeocityinputVivia}-b]. 
 
\begin{figure}[t!]
 	\centering
 	\vspace{3mm} 
a) 	\includegraphics[width=2.7 in, height= 1.9 in]{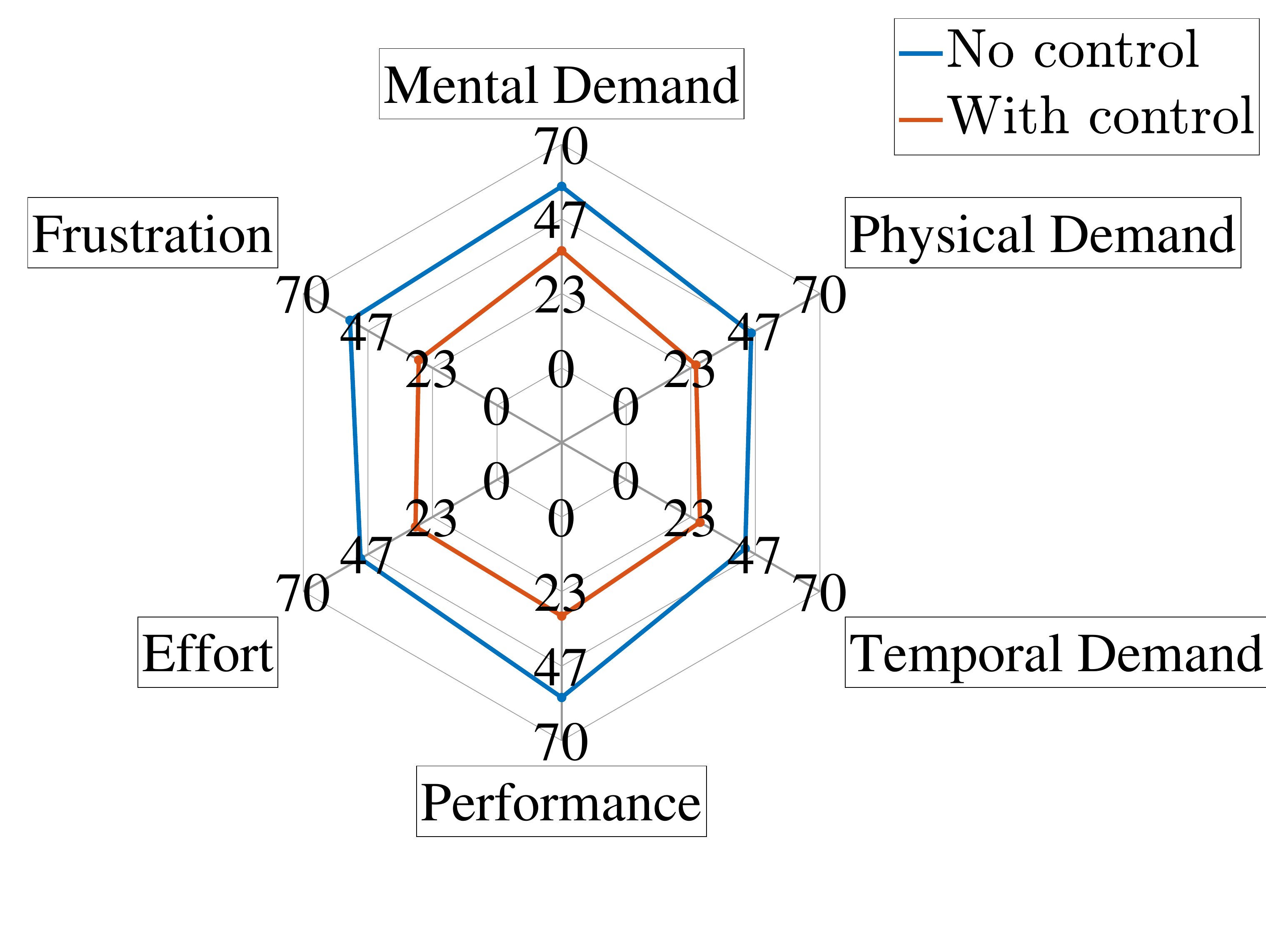}	\\
b)	\includegraphics[width=2.7 in, height= 1.9 in]{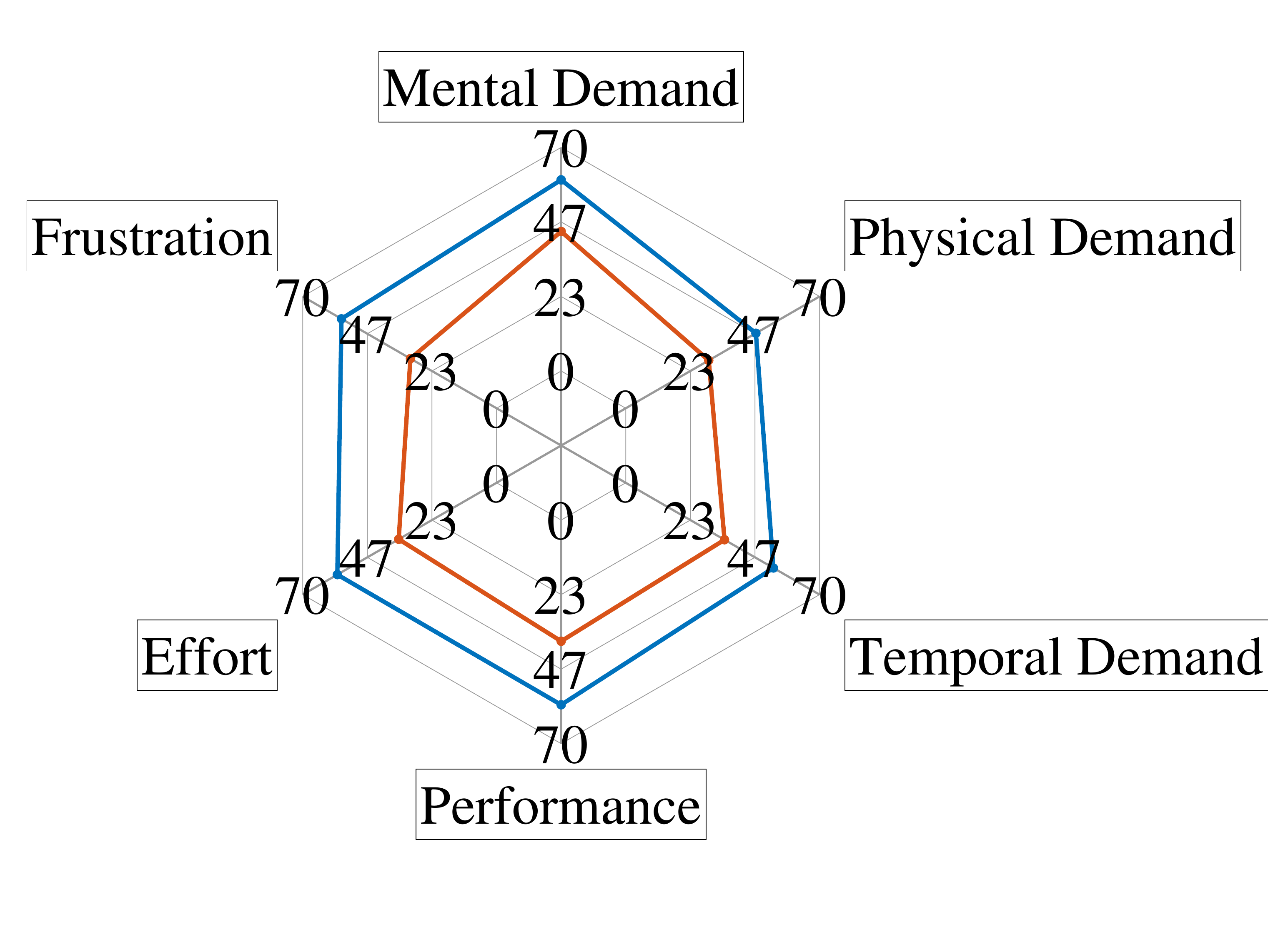}	
 	\caption{NASA-TLX results with raw ratings (a) The Viviani's curve (b) Sharp spiral curve.}\label{Fig:spiderplot}
 \end{figure}
 \begin{figure}[t!]
	\centering
	\vspace{3mm} 
a)		\includegraphics[width=2.7 in, height= 2.0 in]{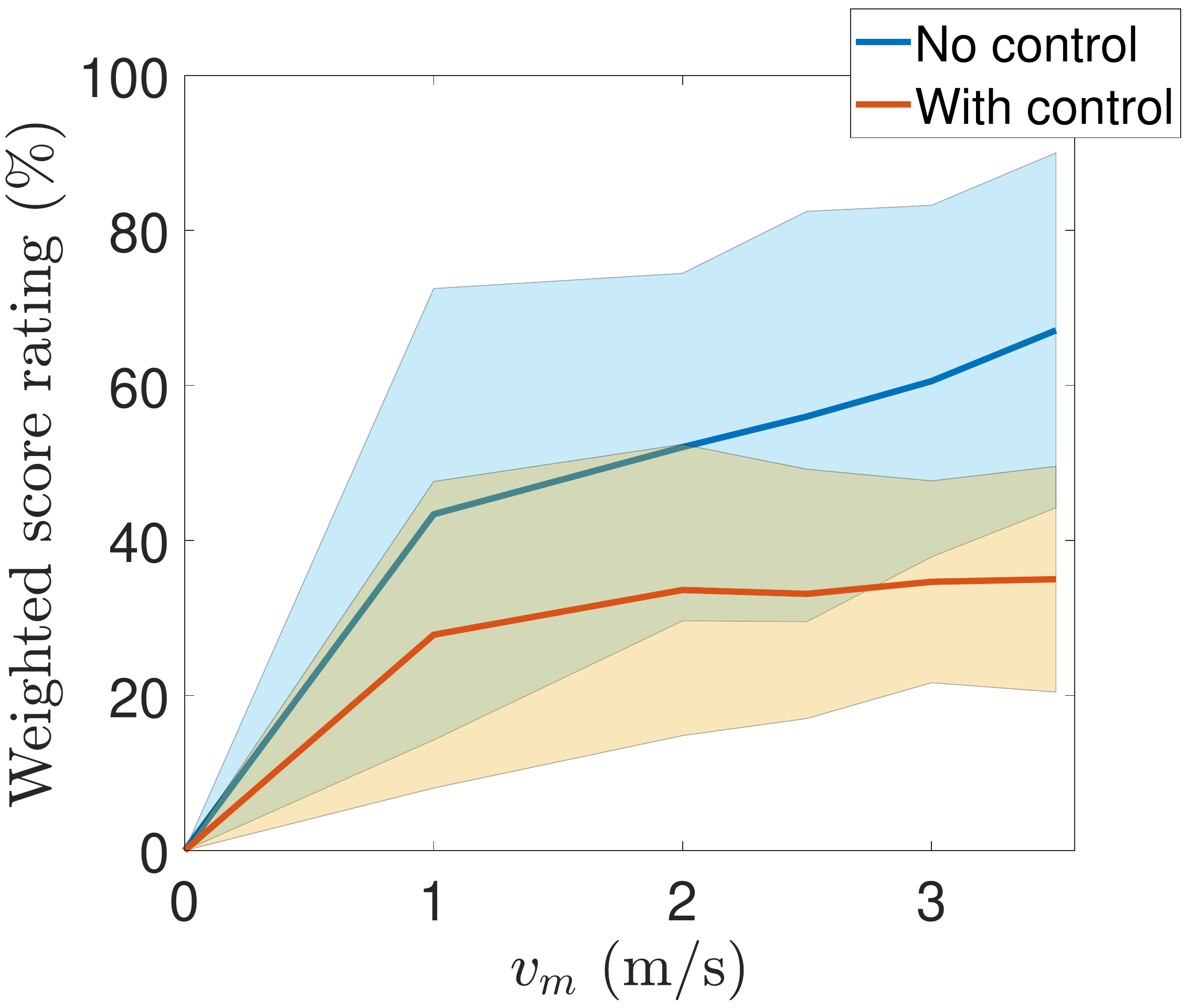}\\		
b)		\includegraphics[width=2.7 in, height= 2.0 in]{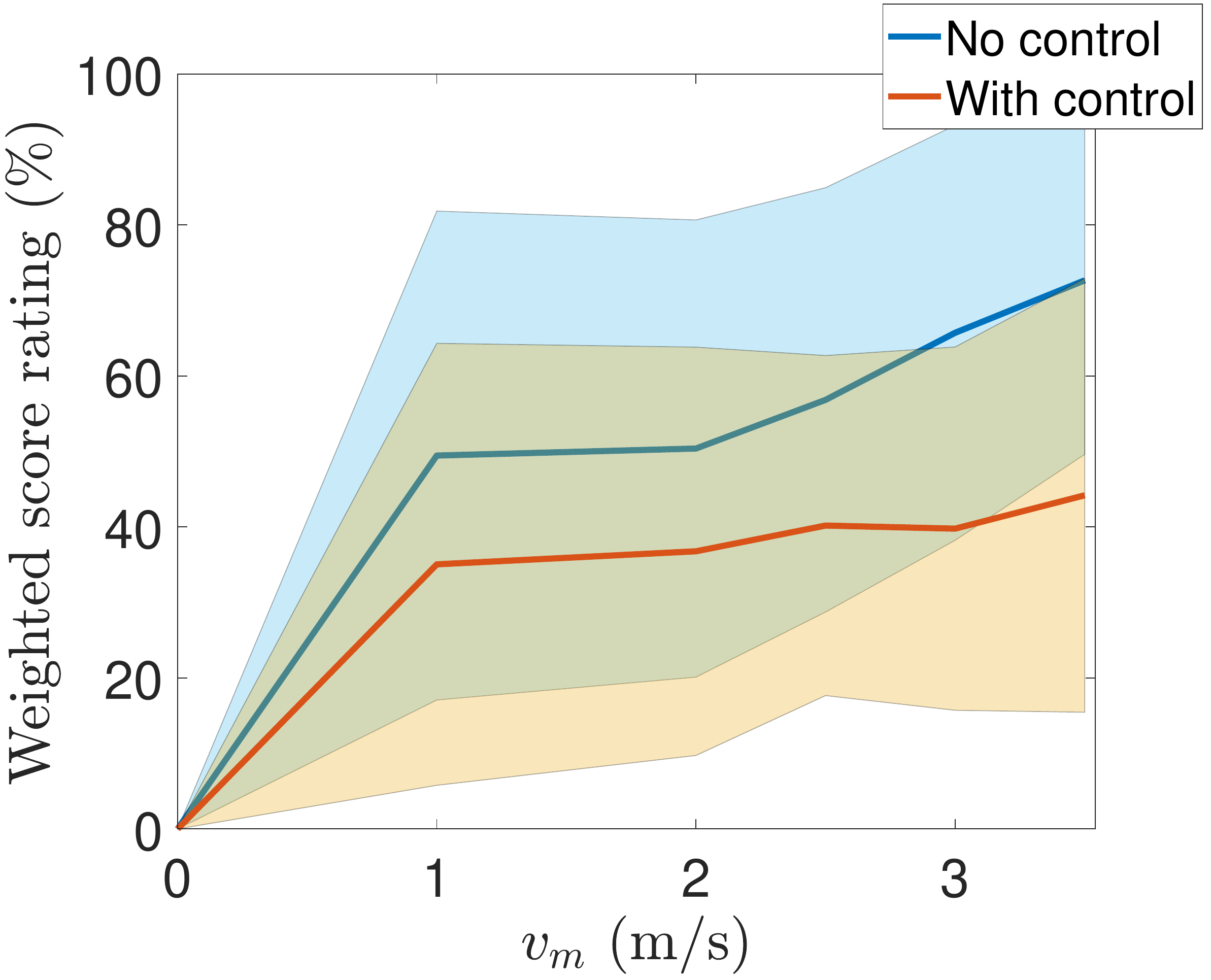}	
	\caption{\makehighlight{The weighted score rating with NASA-TLX questionnaire (a) The Viviani's curve (b) Sharp spiral curve. The solid line and shaded areas are the mean value and standard deviation. }}\label{Fig:ratioweighhed}
\end{figure}
Finally, to generalize our findings and understand how participants experience the run with and without the controller, we analyze the answered NASA-TLX questionnaire. At first, Fig. \ref{Fig:spiderplot} demonstrates the users' raw rating without weights. In this plot, the ratings are the mean of different velocities ($v_m \in [1$-$3.5]$ m/s) that participants have undergone the trials. It is clear from the graph that our proposed controller has a great satisfaction level (note that the lower value for each scale, the better it is) with an average difference of 21\% for both trajectories. Although the success of the proposed assistive controller can be verified, there are indications that users feel some challenges with respect to the mental demand and effort. We hypothesize that the source of high mental demand is the initial learning curve that should be followed to achieve better results by the controller. Thus, the participants feel more responsible in deciding the trajectory. \makehighlight{This means there are certain limitations of the current controller. For example, the following questions arise: how we can decrease the burden of high sensitivity to user inputs while following safe and smooth trajectories. Also, how we can sense user intention by extra sensors to minimize her/his mental demand.} We plan to develop proper strategies to answer these questions.

Another advantage of the controller can be seen with the weighted score rating in different maximum velocities as indicated in Fig. \ref{Fig:ratioweighhed}. It can be interpreted that the trend of the evaluation is similar to both of the routes. For the Viviani's curve, the rating score of the participants is around 29\% with the controller while the rating increase over 65 \% for the case of high velocities ($v_m=3.5$ m/s) without the controller. With a similar pattern for sharp spiral curves, the participants are approximately 20\% more satisfied with the use of the controller. It is clear that the satisfaction level gets higher as the controller mean score stays around 40\%. Also, the standard deviation of the scores is lesser which implies most of the evaluated participants give approximately similar score ratings for the experimented assistive controller. 
 
\section{Conclusions}
In this paper, we proposed a novel geometric controller for assisting the users of differential-drive wheeled mobile robots, without having any priori desired goal. The assistive controller was using a virtual wheel formulation based on the Darboux frame kinematics to correct the vehicle motion. At first, the kinematics of the differential-drive vehicle together with Darboux frame kinematics for the controller is described. Next, the motion control problem with considering the safety constraints is explained. Then, we developed our geometric assistive controller with defining the proper functions. Finally, the controller performance at different routes was evaluated. The experiment took place with different participants using a differential-drive wheelchair robot. We were able to confirm that the controller outperforms  with great level satisfaction from collected questionnaire (NASA-TLX) responses besides the motion behaviors. The results also show that users using the assistive controller put less effort in directing wheelchairs in complex paths with user-assigned vehicle velocities while the realized trajectories are smooth. \makehighlight{ Note that the experiments done with the wheelchair were in the maximum velocity of $3.5$ m/s. This shows how assistive controller is essential since most vehicles have even higher velocities in which emphasize the importance of assistive controllers.}

\makehighlight{In the future, we plan to develop more advanced shared autonomy by combining the assistive controller with path planning methods.} Also, we will extend the problem to more complex 3-dimensional trajectories to create a more generic assistive controller that can be applied to other robot platforms namely, manipulators, drones, and so on.   
\section{Acknowledgments}
We would like to thank Prof. Jean-Paul Laumond and anonymous reviewers for their valuable comments regarding the paper. This work was partially supported by JSPS KAKENHI grant number JP21K20391 and Japan Science and Technology Agency (JST) [Moonshot R$\&$D Program] under Grant JPMJMS2034. 
 


\bibliographystyle{IEEEtran}
\bibliography{MotionDarbWheel} 






\end{document}